\newtheorem{theorem}{Theorem}
\theoremstyle{definition}
\newtheorem{definition}{Definition}
\newtheorem{lemma}{Lemma}
\newtheorem{prop}{Proposition}
\DeclareMathOperator*{\argmax}{arg\,max}
\DeclareMathOperator*{\argmin}{arg\,min}
\providecommand{\keywords}[1]
{
  \small	
  \textbf{\textit{Keywords---}} #1
}
\title{Interpretable Clustering on Dynamic Graphs \\ with Recurrent Graph Neural Networks}
\author{

    Yuhang Yao, 
    Carlee Joe-Wong\\
}
\begin{document}

\maketitle

\begin{abstract}
We study the problem of clustering nodes in a dynamic graph, where the connections between nodes and nodes' cluster memberships may change over time, e.g., due to community migration. We first propose a dynamic stochastic block model that captures these changes, and a simple decay-based clustering algorithm that clusters nodes based on weighted connections between them, where the weight decreases at a fixed rate over time. This decay rate can then be interpreted as signifying the importance of including historical connection information in the clustering. However, the optimal decay rate may differ for clusters with different rates of turnover. We characterize the optimal decay rate for each cluster and propose a clustering method that achieves almost exact recovery of the true clusters. We then demonstrate the efficacy of our clustering algorithm with optimized decay rates on simulated graph data. Recurrent neural networks (RNNs), a popular algorithm for sequence learning, use a similar decay-based method, and we use this insight to propose two new RNN-GCN (graph convolutional network) architectures for semi-supervised graph clustering. We finally demonstrate that the proposed architectures perform well on real data compared to state-of-the-art graph clustering algorithms.

\end{abstract}
\keywords{Node Classification, Dynamic Graphs, RNN, GNN}

\section{Introduction}

Clustering nodes based on their connections to each other is a common goal of analyzing graphs, with applications ranging from social to biological to logistics networks. Most such clustering approaches assume that the connections (i.e., edges) between nodes, and thus the optimal clusters, do not change over time~\cite{lei2015consistency,qin2013regularized}. In practice, however, many graph structures will evolve over time. Users in social networks, for example, may migrate from one community to another as their interests or employment status changes, forming new connections with other users (i.e., new edges in the graph) and changing the cluster or community to which they belong. Thus, clustering algorithms on such evolving graphs should be able to track changes in cluster membership over time.

A major challenge in tracking cluster membership changes is to carefully handle historical information and assess its value in predicting the current cluster membership. Since clusters will often evolve relatively slowly, an extreme approach that does not consider edges formed in the past risks ignoring useful information about the majority of nodes whose memberships have not changed. On the other hand, making no distinction between historical and more recently formed edges may lead to slow detection of nodes' membership changes, as the historically formed edges would dominate until the nodes have enough time to make connections within their new clusters. Prior works have balanced these effects by introducing a \emph{decay rate}: the weight of each edge is reduced by a constant decay factor in each time step between the connection formation and the time at which cluster membership is estimated. The cluster membership can then be estimated at any given time, by taking the weighted connections as input to a static algorithm like the well-studied spectral clustering~\cite{abbe2017community}.

Accounting for historical node connections with a single decay rate parameter offers the advantage of interpretability: the decay rate quantifies the emphasis put on historically formed edges, which can be tuned for specific datasets. Yet while prior works have examined the optimal decay rate for stylized network models, they use a single decay rate for all edges~\cite{keriven2020sparse}. In practice, the optimal rate will likely vary, e.g., with 
higher decay rates for clusters with higher membership turnover where historical information might reflect outdated cluster memberships, making it less useful. Introducing different decay rates for each cluster, on the other hand, raises a new challenge: since we do not know the true cluster memberships for each node, we may use the wrong decay rate if a node is erroneously labeled. Moreover, the optimal decay rates for different clusters will be correlated due to connections between nodes in different clusters that themselves must be optimally weighted, potentially making the decay rates difficult to optimize.

More sophisticated semi-supervised clustering methods combine LSTM (long-short-term memory) or RNN (recurrent neural network) structures with graph convolutional networks (GCNs), producing a neural network that classifies nodes based on their cluster membership labels. This network can be carefully trained to optimize the use of historical edge information, without explicitly specifying different node decay rates~\cite{pareja2020evolvegcn}. However, while such algorithms show impressive empirical performance on large graph datasets, they are generally not easily interpretable.

Our work seeks to connect the theoretical analysis of graph clustering algorithms with the graph neural networks commonly used in practice. Our key insight in doing so is that \emph{prefacing a GCN with a RNN layer} can be interpreted as imposing a decay rate on node connections that depends on each node's current cluster membership, and then approximating spectral clustering on the resulting weighted graph via the GCN. Following this insight, we propose two new \emph{transitional RNN-GCN neural network architectures} (RNNGCN and TRNNGCN) for semi-supervised clustering. We derive the theoretically optimal decay rates for nodes in each cluster under stylized graph models, and show that the weights learned for the RNN layer in TRNNGCN qualitatively match the theoretically optimal ones.
%
%
%
%
%
%
After reviewing related work on theoretical and empirical graph clustering, we make the following specific contributions:
\begin{itemize}
    \item A \textbf{theoretical analysis of the optimal decay rates} for spectral clustering algorithms applied to the dynamic stochastic block model, a common model of graph clustering dynamics~\cite{keriven2020sparse}.
    \item Two \textbf{new RNN-GCN neural network architectures} that use an interpretable RNN layer to capture the dynamics of evolving graphs and GCN layers to cluster the nodes.
    \item Our algorithm can achieve \textbf{almost exact recovery} by including a RNN layer that decays historical edge information. Static methods can only partially recover the true clusters when nodes change their cluster memberships with probability $\mathcal{O}\left(\frac{\log n}{n}\right)$, $n$ being the number of nodes.
    \item \textbf{Experimental results} on real and simulated datasets that show our proposed RNN-GCN architectures outperform state-of-the-art graph clustering algorithms.
\end{itemize}






\section{Related Work}

Over the past few years, there has been significant research dedicated to graph clustering algorithms, motivated by applications such as community detection in social networks. While some works consider theoretical analysis of such clustering algorithms, more recently representation learning algorithms have been proposed that perform well in practice with few theoretical guarantees. We aim to \emph{connect} these approaches by using a theoretical analysis of decay-based dynamic clustering algorithms to design a new neural network-based approach that is easily interpretable.

\textbf{Theoretical analyses.}
Traditional spectral clustering algorithms use the spectrum of the graph adjacency matrix to generate a compact representation of the graph connectivity~\cite{lei2015consistency,qin2013regularized}. A line of work on static clustering algorithms uses the stochastic block model for graph connectivity~\cite{abbe2017community}, which more recent works have extended to a dynamic stochastic block model~\cite{keriven2020sparse,pensky2019spectral}. While these works do not distinguish between clusters with different transition probabilities, earlier models incorporate such heterogeneity~\cite{xu2015stochastic}. Other works use a Bayesian approach~\cite{yang2011detecting}, scoring metrics~\cite{agarwal2018dyperm,zhuang2019dynamo}, or multi-armed bandits~\cite{mandaglio2019dynamic} to detect communities and their evolution, while~\citet{xu2010evolutionary} use a decay rate similar to the one we propose.

As \textbf{representation learning} becomes popular, graph neural networks~\cite{zhang2018link,wu2020comprehensive,kipf2016semi} such as
GraphSage~\cite{hamilton2017inductive} have been used to cluster nodes in graphs based on (static) connections between nodes and node features. Graph Attention Networks (GAT)~\cite{velivckovic2017graph,xu2019self} use attention-based methods to construct a neural network that highlights the relative importance of each feature, while dynamic supervised~\cite{kumar2019predicting} and unsupervised~\cite{goyal2020dyngraph2vec} methods can track general network dynamics, or may be designed for clustering on graphs with dynamic edges and dynamic node features~\cite{chen2018gc,xu2019spatio,xu2020inductive}. EvolveGCN~\cite{pareja2020evolvegcn} usesa  GCN to evolve the RNN weights, which is similar to our approach; however, we ensure interpretability of the RNN weights by placing the RNN before the GCN layers, which we show improves the clustering performance.

Finally, several works have considered the \textbf{interpretability of general GCN and RNN structures}~\cite{dehmamy2019understanding,liang2017interpretable,guo2019exploring}, such as GNNExplainer~\cite{ying2019gnnexplainer}. In the context of graph clustering, some works have used attention mechanisms to provide interpretable weights on node features~\cite{xu2019spatio}, but attention may not capture true feature importance~\cite{serrano2019attention}. Moreover, these works do not consider the importance of \emph{historical} information, as we consider in this work.













\section{Model}
We first introduce a dynamic version of the Stochastic Block Model (SBM) often used to study graph clustering~\cite{holland1983stochastic,abbe2017community}, which we will use for our theoretical analysis in the rest of the paper.

\subsection{Stochastic Block Model} For positive integers $K$ and $n$, a probability vector $p\in [0,1]^K$, and a symmetric connectivity matrix $B\in[0,1]^{K\times K}$, the SBM defines a random graph with $n$ nodes split into $K$ clusters. The goal of a prediction method for the SBM is to correctly divide nodes into their corresponding clusters, based on the graph structure. Each node is independently and randomly assigned a cluster in $\{1,...,K\}$ according to the distribution $p$; we can then say that a node is a ``member'' of this cluster. Undirected edges are independently created between any pair of nodes in clusters $i$ and $j$ with probability $B_{ij}$,
where the $(i,j)$ entry of $B$ is 
\begin{equation}
B_{ij}=\left\{
\begin{aligned}
 \alpha& ,\;i=j\\
\tau \alpha & ,\;i\neq j,
\end{aligned}
\right.
\end{equation}
for $\alpha\in(0,1)$ and $\tau\in(0,1)$, implying that the probability of an edge forming between nodes in the same cluster is $\alpha$ (which is the same for each cluster) and the edge formation probability between nodes in different clusters is $\tau \alpha$.

Let $\Theta \in {\{0,1\}}^{n\times K}$ denotes the matrix representing the nodes' cluster memberships, where $\Theta_{ik}=1$ indicates that node $i$ belongs to the $k$-th cluster, and is $0$ otherwise. 
We use $A\in\{0,1\}^{n \times n}$ to denote the (symmetric) adjacency matrix of the graph, where $A_{ij}$ indicates whether there is a connection (edge) between node $i$ and node $j$. From our node connectivity model, we find that given $\Theta$, for $i<j$, we have
\begin{equation}
    A_{ij}|\{\Theta_{ik}=1,\Theta_{jl}=1\} \backsim \text{Ber}(B_{kl}),
\end{equation} 
where $\text{Ber}(p)$ indicates a Bernoulli random variable with parameter $p$. We define $A_{ii}=0$ (nodes are not connected directly to themselves) and since all edges are undirected, $A_{ij}=A_{ji}$. We further define the connection probability matrix $P=\Theta B \Theta^T \in [0,1]^{n\times n}$, where $P_{ij}$ is the connection probability of node $i$ and node $j$ and
    $\mathbb{E}[A]=P-\text{diag}(P)$.

\subsection{Dynamic Stochastic Block Model} We now extend the SBM model to include how the graph evolves over time. We consider a set of discrete time steps $t = 1,2,\ldots,T$.
At each time step $t$, the Dynamic SBM generates new intra- and inter-cluster edges according to the probabilities $\alpha$ and $\tau\alpha$ as defined for the SBM above. All edges persist over time. 
We assume a constant number of nodes $n$, number of clusters $K$, and connectivity matrix $B$, but the node membership matrix $\Theta_t$ depends on time $t$, i.e., nodes' cluster memberships may change over time. We similarly define the connectivity matrix $P_t = \Theta_t B (\Theta_t)^T$.

We model changes in nodes' cluster memberships as a Markov process with a constant transition probability matrix $H\in[0,1]^{K\times K}$. Let $\varepsilon_j\in(0,1)$ denotes the change probability of nodes in cluster $j$, i.e., the probability a node in cluster $j$ changes its membership. At each time step, node $v_i$ in cluster $j$ changes its membership to cluster $k$ with the following probability (independently from other nodes):
\begin{equation*}
    H_{j,k}=\mathbb{P}\left[\Theta_{ik}^{t}=1|\Theta_{ij}^{t-1}=1\right]=\left\{
\begin{aligned}
1-\varepsilon_j & , & j=k\\
\frac{\varepsilon_{j}}{K-1} & , & j\neq k,
\end{aligned}
\right.
\end{equation*}
Note that $\varepsilon_j$ may be specific to cluster $j$, e.g., if some clusters experience less membership turnover. We give an example of such a graph in our experimental evaluation.
The goal of a clustering algorithm on a graph is to recover the membership matrix $\Theta$ up to column permutation. Static clustering algorithms give an estimate $\hat{\Theta}$ of the node membership; a dynamic clustering algorithm should produce such an estimate for each time $t$. We define two performance metrics for these estimates (in dynamic graphs, they may be evaluated for an estimate $\hat{\Theta} = \hat{\Theta}_t$ relative to $\Theta = \Theta_t$ at any time $t$):

\begin{definition}
[Relative error of $\hat{\Theta}$]
The relative error of a clustering estimate $\hat{\Theta}$ is
\begin{equation}
E(\hat{\Theta},\Theta)=\frac{1}{n} \min_{\pi\in \mathcal{P}} \|\hat{\Theta} \pi-\Theta\|_0,    
\end{equation}
where $\mathcal{P}$ is the set of all $K\times K$ permutation matrices and $\|.\|_0$ counts the number of non-zero elements of a matrix.
\end{definition}

\begin{definition}
[Almost Exact Recovery] 
A clustering estimate $\hat{\Theta}$ achieves almost exact recovery when
\begin{equation}
 \mathbb{P}\left[1- \frac{1}{n} \min_{\pi\in \mathcal{P}} \|\hat{\Theta} \pi-\Theta\|_0=1-o(1)\right]=1-o(1).
\end{equation}
which also implies that the expectation of $E(\hat{\Theta},\Theta)$ is $o(1)$.
\end{definition}

Our goal is then to find an algorithm that produces an estimate $\hat{\Theta}$ minimizing $E(\hat{\Theta},\Theta)$. In the next section, we discuss the well-known (static) spectral clustering algorithm and analyze a simple decay-based method that allows a static algorithm to make dynamic membership estimates.

\section{Spectral Clustering with Decay Rates}
We now introduce the Spectral Clustering algorithm and optimize the decay rates to minimize its relative error. 
\subsection{Spectral Clustering Algorithm}

Spectral Clustering is a commonly used unsupervised method for graph clustering. The key idea is to apply $K$-means clustering to the $K$-leading left singular vectors of the adjacency matrix $A$~\cite{stella2003multiclass}; we denote the corresponding matrix of singular vectors as $E_K$. We then estimate the membership matrix $\bar{\Theta}$ by solving 
\begin{equation}
    (\bar{\Theta},\bar{C})\in \argmin_{\Theta\in \{0,1\}^{n\times K}, C\in\mathbb{R}^{K\times K}} \|\Theta C - E_K\|_F^2,
    \label{eq:kmeans}
\end{equation}
where $\|.\|_F$ denotes the Frobenius norm. It is well known that finding a global minimizer of Eq.~\eqref{eq:kmeans} is NP-hard. However, efficient algorithms~\cite{kumar2004simple} can find a $(1+\delta)$-approximate solution $(\hat{\Theta},\hat{C})$, i.e., with 
%
    $\|\hat{\Theta}\hat{C}-E_K\|_F^2\leq (1+\delta) \|\bar{\Theta}\bar{C}-E_K\|_F^2$.

\subsection{Introducing Decay Rates}
{In the dynamic SBM, the adjacency matrix $A$ includes edges formed from the initial time step $1$ to the current time step $T$. Let $A_t$ denotes the adjacency matrix only including edges formed at time step $t$. We have $A = \sum_{t=1}^T A_t$}.

Spectral clustering performs poorly on the dynamic SBM:
\begin{prop}[Partial Recovery of Spectral Clustering]\label{prop:recover_rate}
When nodes change their cluster membership over time with probabilities $\varepsilon_j = \mathcal{O}(\frac{\log n}{n})$, {by using the adjacency matrix $A$}, Spectral Clustering recovers the true clusters at time $T$ with relative error $E(\hat{\Theta}_T,\Theta_T) = \mathcal{O}(\frac{\log n}{n} T)$. 
\end{prop}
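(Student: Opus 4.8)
The plan is to show that spectral clustering on the cumulative matrix $A$ recovers the dominant, time-stable cluster structure, so that the only nodes misclassified relative to $\Theta_T$ are those whose membership changed recently enough that the bulk of their edges reflect a different cluster. Because edges accumulate over all $T$ steps, the effective intra-cluster connection probability is of order $T\alpha$, well above the $\mathcal{O}(\log n / n)$ threshold for exact recovery in a static SBM; the ``stable'' subgraph is therefore recovered exactly, and the relative error is governed entirely by the changed nodes. First I would make this reduction precise, bounding $E(\hat\Theta_T,\Theta_T)$ by the sum of a misclassification term on the stable nodes and the fraction of changed nodes, and then argue the former is of lower order.

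Next I would count the changed nodes. Under the Markov transition model with $\varepsilon_j = \mathcal{O}(\log n/n)$, each node changes membership at each step independently with probability at most $\varepsilon_{\max}$, so the probability that a fixed node changes at least once over the $T$ steps is at most $1-(1-\varepsilon_{\max})^T \le T\varepsilon_{\max} = \mathcal{O}(\frac{\log n}{n}T)$. A Chernoff bound over the $n$ independent nodes then shows that the number of changed nodes concentrates around its mean $\mathcal{O}(\log n\cdot T)$ with high probability, giving a changed-node fraction of $\mathcal{O}(\frac{\log n}{n}T)$; these form a superset of the nodes that can be misclassified relative to $\Theta_T$.

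The perturbation analysis is the technical core. I would write $\mathbb{E}[A] = M + \Delta$, where $M = T\,\Theta^\ast B (\Theta^\ast)^T$ with zeroed diagonal is the clean block matrix for the constant clustering $\Theta^\ast$ that agrees with $\Theta_T$ on the stable nodes, and $\Delta$ is supported only on the rows and columns of the $\mathcal{O}(\log n\cdot T)$ changed nodes, with entries bounded by $\mathcal{O}(T\alpha)$. Combining the standard SBM concentration bound $\|A-\mathbb{E}[A]\|_2 = \mathcal{O}(\sqrt{nT\alpha})$ with $\|\Delta\|_2 \le \|\Delta\|_F = \mathcal{O}(T\alpha\sqrt{nT\log n})$, I would apply the Davis--Kahan theorem to control the deviation of the leading-$K$ singular subspace $E_K$ from that of $M$, and then invoke the standard $K$-means misclassification bound to show that no stable node is misclassified, so that term is indeed lower order and the changed-node fraction $\mathcal{O}(\frac{\log n}{n}T)$ dominates.

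The main obstacle is ensuring that the structured perturbation $\Delta$ together with the random fluctuation does not corrupt the singular vectors enough to misclassify a non-negligible set of stable nodes. This reduces to checking that the relevant eigengap of $M$, of order $T\alpha n (1-\tau)$, dominates both $\|A-\mathbb{E}[A]\|_2$ and $\|\Delta\|_2$; the delicate accounting is verifying $\|\Delta\|_2 = o(T\alpha n)$, which holds precisely in the regime $T = o(n/\log n)$ where the claimed error $\frac{\log n}{n}T$ is itself $o(1)$. In that regime Davis--Kahan yields a subspace error small enough that the $K$-means error stays strictly below the changed-node fraction, completing the bound.
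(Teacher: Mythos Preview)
Your proposal is correct but follows a different route from the paper. The paper's argument is node-wise rather than matrix-perturbative: it invokes a lemma (attributed to \citet{abbe2017community}) stating that, under the spectral estimator on the cumulative adjacency matrix, node $i$ is classified correctly with probability $1-o(1)$ if its most recent cluster change occurred before time $t/2$, and with probability $o(1)$ otherwise. It then partitions the nodes into unchanged, changed-before-$t/2$, and changed-after-$t/2$, computes the expected agreement as a weighted sum of these three per-node recovery probabilities, and reads off $\mathbb{E}[A(\hat\Theta_t,\Theta_t)]=1-\mathcal{O}(\tfrac{\log n}{n}t)$.

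By contrast, you work globally: you write $\mathbb{E}[A]=M+\Delta$, control $\|A-\mathbb{E}[A]\|$ and $\|\Delta\|$ separately, pass through Davis--Kahan and a $K$-means misclassification bound, and conclude that no stable node is misclassified, so the error is governed by the changed-node fraction. This buys you a self-contained argument that does not rely on the black-box per-node lemma, and it makes the role of the eigengap and of the regime $T=o(n/\log n)$ explicit. The paper's version is shorter and slightly sharper in that it also credits nodes that changed early (before $t/2$) as correctly recovered, though this refinement does not improve the order of the final bound. Either approach is acceptable; yours is closer in spirit to the Lei--Rinaldo framework the paper uses elsewhere.
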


 To improve the performance, {one can use an exponentially smoothed version $\hat{A}_t$ as input for clustering:}
\begin{equation}
\hat{A}_t=(1-\lambda)\hat{A}_{t-1}+\lambda A_t
\end{equation}
where $\hat{A}_1 = A_1$ and we call $\lambda\in[0,1]$ the \emph{decay rate}~\cite{chi2009evolutionary}. Intuitively, a larger value of $\lambda$ puts less weight on the past information, ``forgetting'' it faster. However, in the dynamic SBM, each cluster $j$ may have a different change probability $\varepsilon_j$, implying that they may benefit from using different decay rates $\lambda$. 
We thus introduce a decay matrix $\Lambda\in[0,1]^{K\times K}$ that gives a different decay rate to connections between each pair of clusters:
\begin{equation}
    \hat{A}_t=(1-\Theta_t \Lambda (\Theta_t)^T)\odot \hat{A}_{t-1}+\Theta_t \Lambda (\Theta_t)^T \odot A_t.
\end{equation}

\subsection{Bounding the Relative Error}
Our analysis uses \citet{lei2015consistency}'s result that the relative error rate of the Spectral Clustering on the dynamic SBM at each time $t$ is bounded by the concentration of the adjacency matrix around its expectation:
\begin{equation}
    E(\hat{\Theta},\Theta)\lesssim (1+\delta)\frac{n_{\max}' K}{n\alpha^2 n_{\min}^2 \tau^2}\|\hat{A}-P\|^2,
    \label{eq:errorbound}
\end{equation}
where $n_{\max}'$ and $n_{\min}$ are respectively the second largest and smallest cluster sizes, and $\|.\|$ denotes the spectral norm. 

Thus, $E(\hat{\Theta},\Theta)$ is determined by the concentration $\|\hat{A}-P\|$, where {$\hat{A}=\hat{A}_t$} and $P=P_t=\Theta_t B (\Theta_t)^T$ as defined in the SBM model. To bound this concentration, we consider $K$ diagonal blocks of the adjacency matrix $\hat{A}_t$, with each block corresponding to edges between nodes in a single cluster, after re-indexing the nodes as necessary. 
Let $\hat{A}_{t}^{k}$ denote the block matrix corresponding to cluster $k$, and similarly consider $K$ blocks $P^t_k$ of the connection probability matrix $P_t$. We can then upper-bound $\left\|\hat{A}_{t}^{k}-P_{t}^{k}\right\|$ in terms of the decay rate:
\begin{prop}[Optimal Decay Rate]\label{prop:opt}
The concentration of each block $k$ is upper-bounded by
\begin{equation}
    \left\|\hat{A}_{t}^{k}-P_{t}^{k}\right\|\lesssim E_1(\beta_{k})+E_2(\beta_{k}),
\end{equation}
where $\beta_k$ denotes the maximum decay rate of class $k$ and
\begin{equation} 
E_1(\beta_{k}) = \sqrt{ n \alpha \beta_{k}},\;
E_2(\beta_{k}) = \alpha \sqrt{\frac{n^2 \varepsilon_k}{\beta_{k}}},
\end{equation}
which is minimized when $\beta_{k}=\sqrt{ n \alpha \varepsilon_k}$.
\end{prop}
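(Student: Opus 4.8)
The plan is to control the block concentration $\|\hat A_t^k - P_t^k\|$ through a \emph{bias--variance decomposition}. First I would unroll the smoothing recursion so that $\hat A_t^k = \sum_{s=1}^t w_s A_s^k$ with exponential weights $w_s = \beta_k(1-\beta_k)^{t-s}$, reading $\beta_k$ as the effective decay rate governing block $k$ (the largest $\Lambda$-entry touching cluster $k$, which gives the weakest memory and hence a valid upper bound). Two elementary geometric-series facts drive everything: $\sum_s w_s \approx 1$, so the weights form a probability average, and $\sum_s w_s^2 \approx \beta_k/2$, which will serve as the effective variance multiplier. I then split
\[
\hat A_t^k - P_t^k \;=\; \underbrace{\bigl(\hat A_t^k - \mathbb{E}\hat A_t^k\bigr)}_{\text{fluctuation}} \;+\; \underbrace{\bigl(\mathbb{E}\hat A_t^k - P_t^k\bigr)}_{\text{bias}},
\]
bound the two terms by $E_1$ and $E_2$ respectively, and recombine with the triangle inequality.

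For the fluctuation term I would invoke a matrix concentration inequality (matrix Bernstein, or the \citet{lei2015consistency}-style bound already feeding \eqref{eq:errorbound}) applied to the mean-zero matrix $\sum_s w_s(A_s^k-\mathbb{E}A_s^k)$. Its entries are independent (up to symmetry), bounded by $\sum_s w_s\le 1$, and have variance $\sum_s w_s^2\,\mathrm{Var}(\mathrm{Ber}(\alpha)) \lesssim \alpha\beta_k$; the maximal expected squared row norm is therefore $\approx n\alpha\beta_k$. This yields a spectral-norm bound of leading order $\sqrt{n\alpha\beta_k}$, the remaining $\sqrt{\log n}$ term being lower order in the working regime $n\alpha\beta_k\gtrsim\log n$, and hence exactly $E_1(\beta_k)=\sqrt{n\alpha\beta_k}$. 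The trade-off is already visible: a larger $\beta_k$ (faster forgetting) keeps fewer effective samples, so the fluctuation grows like $\sqrt{\beta_k}$.

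For the bias term I would argue that $\mathbb{E}\hat A_t^k=P_t^k$ for every node that has stayed in cluster $k$ throughout the memory window, so the only nonzero rows/columns of $\Delta=\mathbb{E}\hat A_t^k-P_t^k$ come from nodes that \emph{migrated into} cluster $k$. A node that switched $\ell$ steps ago carries stale weight $\sum_{s\le t-\ell}w_s\approx(1-\beta_k)^\ell$ on edges generated at the inter-cluster rate $\tau\alpha$ rather than $\alpha$, so each such entry is biased by $\approx \alpha(1-\tau)(1-\beta_k)^\ell$. Since the Markov dynamics inject $O(n\varepsilon_k)$ new members per step, I would bound the spectral norm by the Frobenius norm and sum the geometric contributions,
\[
\|\Delta\|^2 \le \|\Delta\|_F^2 \;\lesssim\; \sum_{\ell\ge 0}(n\varepsilon_k)\,n\,\bigl(\alpha(1-\beta_k)^\ell\bigr)^2 \;\lesssim\; \frac{n^2\varepsilon_k\alpha^2}{\beta_k},
\]
which is $E_2(\beta_k)^2$. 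Here the trade-off runs the opposite way: a smaller $\beta_k$ retains more stale edges and inflates the bias like $1/\sqrt{\beta_k}$.

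Adding the bounds gives $\|\hat A_t^k-P_t^k\|\lesssim \sqrt{n\alpha\beta_k}+\alpha\sqrt{n^2\varepsilon_k/\beta_k}$, and minimizing over $\beta_k$ (differentiate and set to zero, or apply AM--GM to the two terms, which balance) locates the optimum at $\beta_k=\sqrt{n\alpha\varepsilon_k}$, as claimed. I expect the fluctuation step to be the main obstacle: the cleanest argument treats the $A_s$ as independent fresh Bernoulli draws, but the model specifies that edges \emph{persist}, so the increments $A_s$ are genuinely dependent (an edge can only be ``formed'' once). I would therefore either recast the concentration for this dependent, non-identically-distributed weighted sum, or verify that persistence only reduces the relevant variance so that the independent-sample bound remains valid. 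Threading the effective-variance constant $\sum_s w_s^2\sim\beta_k$ correctly through the chosen matrix inequality, together with the sparsity condition $n\alpha\beta_k\gtrsim\log n$ required for concentration, is the delicate part; the bias estimate, by contrast, is a routine counting-plus-Frobenius argument.
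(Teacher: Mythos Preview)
Your proposal is correct and matches the paper's argument: the same bias--variance split (the paper writes $\hat P_t^k:=\sum_s\beta_s^kP_{t-s}^k$, which is your $\mathbb{E}\hat A_t^k$ up to a negligible diagonal), the same matrix-concentration step for the fluctuation (the paper invokes the weighted-Bernoulli theorem of \citet{keriven2020sparse}, exactly the bound you sketch via $\sum_s w_s^2\sim\beta_k$), and a Frobenius-norm bias estimate driven by the count of migrated nodes. The only cosmetic difference is that the paper first triangles over time slices, $\|\hat P_t^k-P_t^k\|\le\sum_s\beta_s^k\|P_{t-s}^k-P_t^k\|_F$ with $\|P_{t-s}^k-P_t^k\|_F^2\lesssim \alpha^2 n\cdot(sn\varepsilon_k)$, and then uses the weight assumption $\sum_s\beta_s^k\sqrt{s\varepsilon_k}\lesssim\sqrt{\varepsilon_k/\beta_k}$, whereas you group rows by migration age directly; both routes land on $E_2$, and your flagged edge-persistence issue is equally present in the paper's invocation of the concentration theorem.
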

We formally prove this result in our supplementary material. The intuition is that if the change probability $\varepsilon_k$ is larger, we need a higher decay rate to remember less past information.
We thus define the decay rates as
\begin{equation}
    \Lambda_{jk}=\left\{
\begin{aligned}
\min(1,\sqrt{n \alpha  \varepsilon_k})& , &j=k\\
1& , &j\neq k.
\end{aligned}
\right.
\end{equation}
This decay rate yields almost exact recovery:
\begin{prop}[Almost Exact Recovery]\label{prop:bound}
 Let $\lambda_{\max}$ denote the maximum element on the diagonal of $\Lambda$. With probability at least $1-n^{-\nu}$ for any $\nu > 0$, at any time $t$ we have 
\begin{equation}
    \left\|\hat{A}_t-P_t\right\|\lesssim \sqrt{ n \alpha \lambda_{\max}}
\end{equation}
When $K$ is constant, $\varepsilon_k=\mathcal{O}\left(\frac{\log n}{n}\right)$ and $\alpha=\mathcal{O}\left(\frac{\log n}{n}\right)$, the relative error is $\mathcal{O}\left(\frac{1}{n^{\frac{1}{4}}\log n}\right)$, which implies almost exact recovery at time $T$. 
\end{prop}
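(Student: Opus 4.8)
The plan is to prove the result in two stages: first upgrade the per-block estimate of Proposition~\ref{prop:opt} into a concentration bound on the \emph{whole} matrix, $\|\hat{A}_t - P_t\| \lesssim \sqrt{n\alpha\lambda_{\max}}$, holding with probability at least $1-n^{-\nu}$ simultaneously for all $t \le T$; and then substitute this bound into the relative-error inequality~\eqref{eq:errorbound} and simplify under the stated parameter scalings to obtain an $o(1)$ rate, from which almost exact recovery follows by Definition~2.

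For the first stage, I would re-index the nodes by their time-$t$ cluster membership and partition $\hat{A}_t - P_t$ into $K^2$ blocks. On each diagonal block $k$ the chosen decay rate is $\Lambda_{kk} = \min(1,\sqrt{n\alpha\varepsilon_k})$, so Proposition~\ref{prop:opt}, evaluated at the minimizer $\beta_k = \sqrt{n\alpha\varepsilon_k}$, gives $\|\hat{A}_t^k - P_t^k\| \lesssim (n\alpha)^{3/4}\varepsilon_k^{1/4} = \sqrt{n\alpha\,\Lambda_{kk}}$, and taking the maximum over $k$ yields $\sqrt{n\alpha\lambda_{\max}}$. The off-diagonal blocks carry decay rate $\Lambda_{jk}=1$, so $\hat{A}_t^{jk} = A_t^{jk}$ is a single fresh $\mathrm{Ber}(\tau\alpha)$ matrix whose centered spectral norm I would control by a standard sparse random-matrix concentration bound (matrix Bernstein, or the adjacency concentration of~\citet{lei2015consistency}) valid once $\tau\alpha \gtrsim \log n/n$, contributing a term of order $\sqrt{n\tau\alpha}$. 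Since $K$ is constant, the spectral norm of the full matrix is bounded up to the factor $K$ by the largest block norm; tracking both block types through~\eqref{eq:errorbound}, whose denominator carries a compensating $\tau^2$, the diagonal contribution is the one that sets the final rate. The high-probability guarantee comes from the polynomial tail $1-n^{-\nu}$ of these matrix concentration inequalities together with a union bound over the $K^2$ blocks and the $t\le T$ time steps.

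For the second stage, I would insert $\|\hat{A}_t-P_t\|^2 \lesssim n\alpha\lambda_{\max}$ into~\eqref{eq:errorbound} with $n'_{\max},n_{\min}$ of order $n$ and $K=\mathcal{O}(1)$, reducing the error to a constant multiple of $\lambda_{\max}/(n\alpha)$. Substituting $\alpha,\varepsilon_k = \mathcal{O}(\log n/n)$ gives $\lambda_{\max}$ of order $\log n/\sqrt{n}<1$, and a direct simplification produces the stated rate $\mathcal{O}\!\left(1/(n^{1/4}\log n)\right)=o(1)$. Because this holds on the event of probability $1-n^{-\nu}=1-o(1)$ and $E(\hat{\Theta}_T,\Theta_T)\in[0,1]$ deterministically, both the probability and the expectation requirements of Definition~2 are met, giving almost exact recovery at time $T$.

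The main obstacle is the first stage: assembling the off-diagonal blocks and controlling $\|\hat{A}_t-P_t\|$ \emph{uniformly over $t$}. The smoothed matrix $\hat{A}_t$ is a decaying weighted sum of the $A_s$ with $s\le t$, so its fluctuations accumulate across time, and the memberships $\Theta_s$ evolve as a correlated Markov chain rather than being fixed; this dependence, together with the coupling between different clusters' decay rates noted in the introduction, is what makes the per-block reduction and the union bound over time delicate, and is where I would expect to spend most of the effort.
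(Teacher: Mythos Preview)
Your plan coincides with the paper's proof: split $\hat{A}_t-P_t$ into $K^2$ blocks according to time-$t$ membership, control the $K$ diagonal blocks via Proposition~\ref{prop:opt} to obtain $\sqrt{n\alpha\lambda_{\max}}$, argue the off-diagonal blocks are dominated (the paper simply invokes $\tau\ll 1$ rather than a separate sparse-matrix concentration bound as you propose), and then substitute into~\eqref{eq:errorbound} under the stated scalings. The uniformity-over-$t$ issue you flag as the main obstacle is not addressed in the paper---it establishes the bound for each fixed $t$---so on that point you are being more careful than the original.
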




\subsection{Connection between GCN and Spectral Clustering}
We empirically demonstrate that Proposition~\ref{prop:opt}'s decay rate is optimal by varying the decay rates used in both spectral clustering and the commonly used Graph Convolutional Network (GCN), which is a first-order approximation of spectral convolutions on graphs~\cite{kipf2016semi}. A multi-layer GCN has the layer-wise propagation rule:
\begin{equation}
    H^{(l+1)}=\sigma(\widetilde{D}^{-\frac{1}{2}}\widetilde{A}\widetilde{D}^{-\frac{1}{2}}H^{(l)}W^{(l)}),
\end{equation}
where $\widetilde{A}=A+I_N$, $I_N$ is the identity matrix, $\widetilde{D}_{ii}=\sum_j \widetilde{A}_{ij}$ and $W^{(l)}$ is a layer-specific trainable weight matrix. The activation function is $\sigma$, typically ReLU (rectified linear units), with a softmax in the last layer for graph clustering. The node embedding matrix in the $l$-th layer is $H^{(l)}\in \mathbb{R}^{N\times D}$, which contains high-level representations of the graph nodes transformed from the initial features; $H^{(0)}=I_N$.  
\begin{figure}[htp]
    \centering
    \includegraphics[width=0.23\textwidth,trim={0.25cm 0.3cm 1.2cm 0.55cm}, clip]{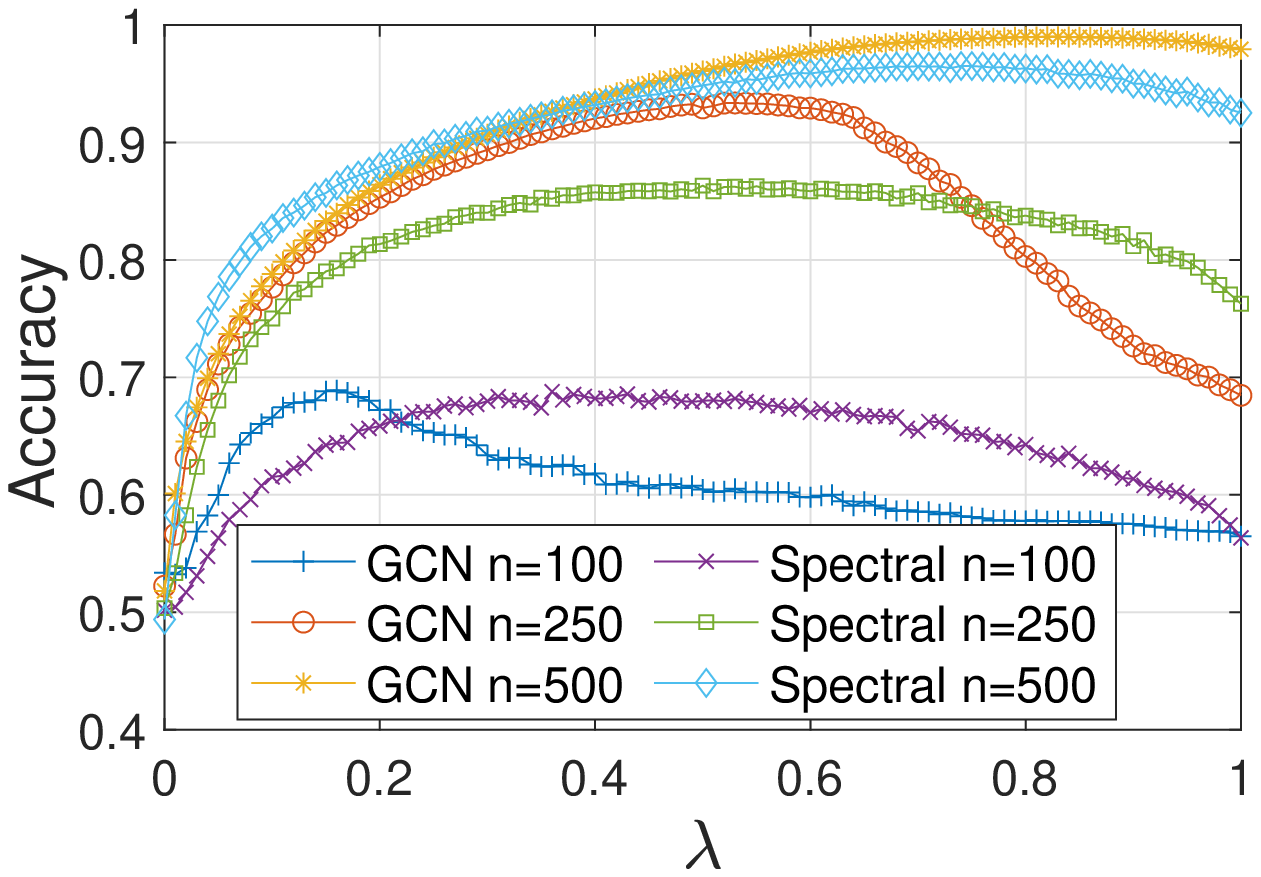}
    \includegraphics[width=0.23\textwidth,trim={0.25cm 0.3cm 1.2cm 0.55cm}, clip]{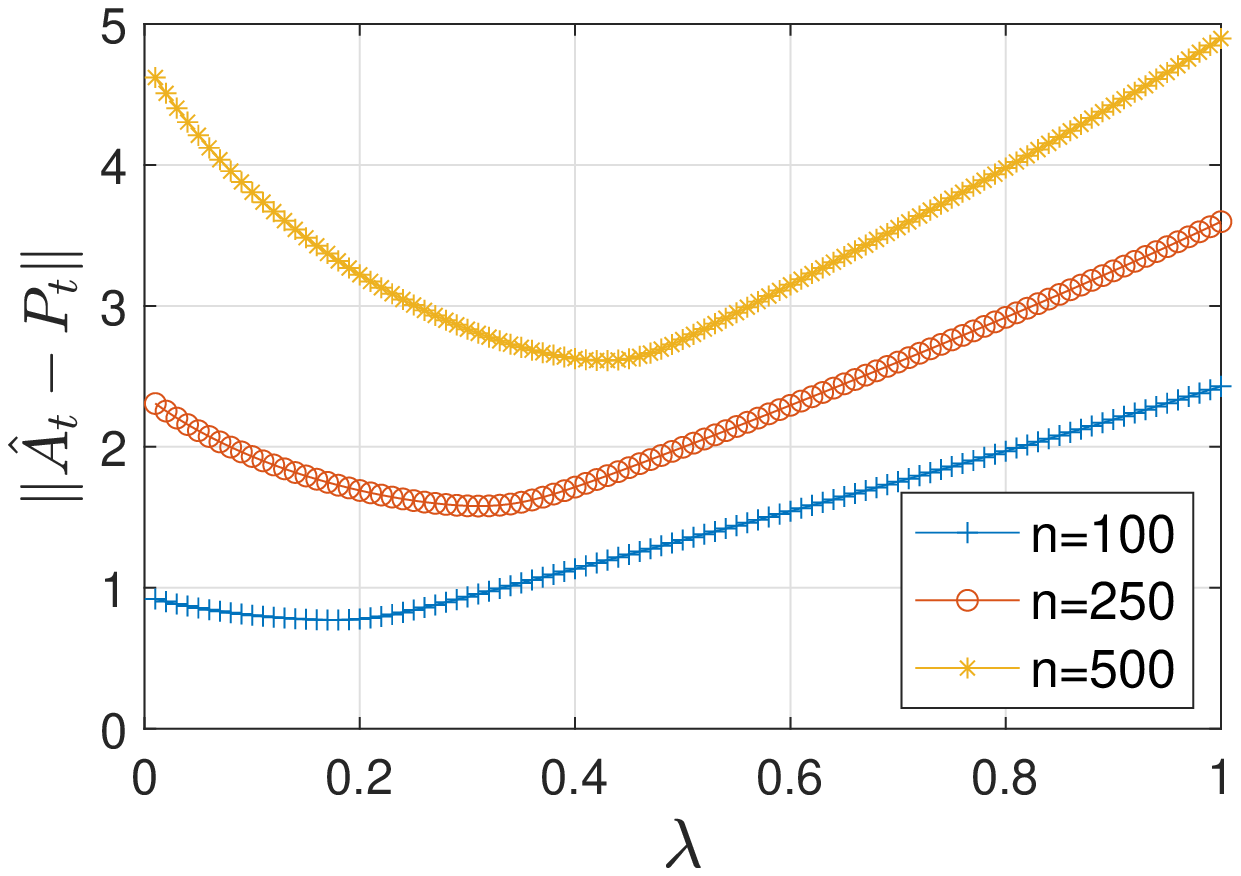}
    \caption{Accuracy and Spectral Norm as we vary $n$. The optimal decay rate $\lambda$ increases with $n$, as in Proposition~\ref{prop:opt}.}
    \label{fig:spec_norm}
\end{figure}
\begin{figure}[ht]
    \centering
    \includegraphics[width=0.23\textwidth,trim={0cm 0.1cm 0cm 0cm}, clip]{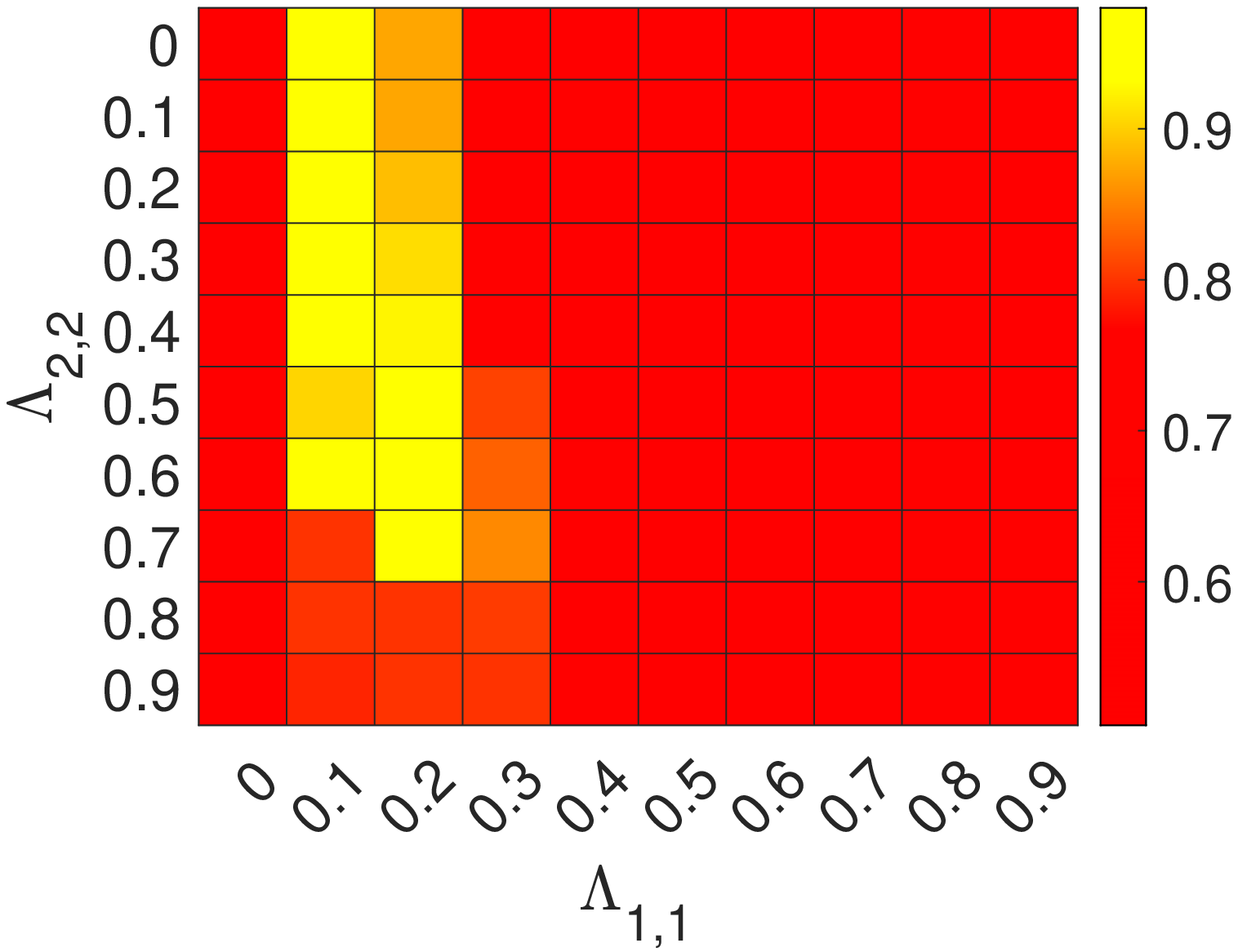}
    \includegraphics[width=0.23\textwidth,trim={0cm 0.1cm 0cm 0cm}, clip]{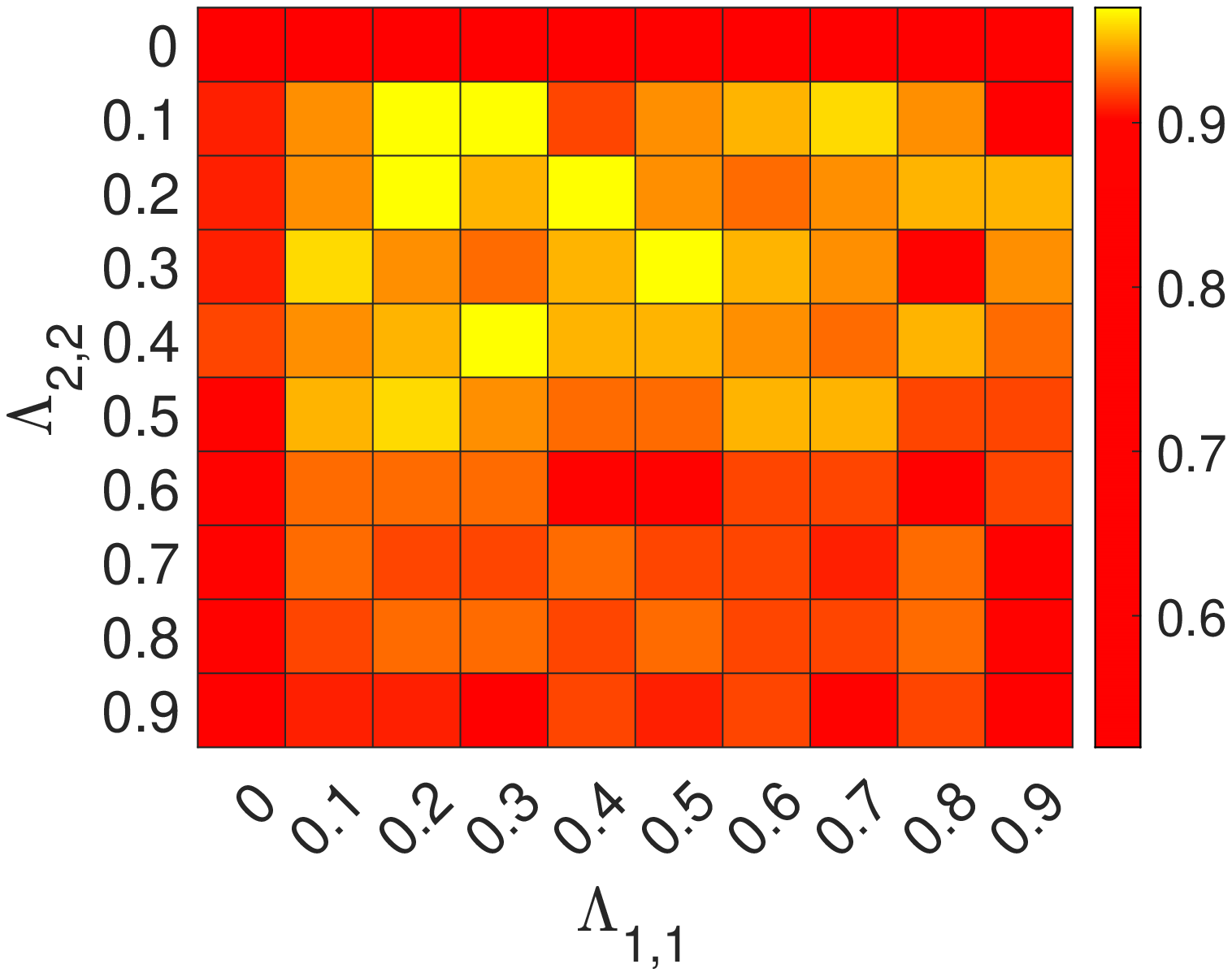}
    \caption{Accuracy as we vary $\Lambda_{1,1}$ and $\Lambda_{2,2}$. Spectral Clustering(left) and GCN(right) have similar optimal decay matrix. The change probabilities are $\varepsilon_1=0.05, \varepsilon_2=0.1$.}
    \label{fig:heat_map}
\end{figure}
Figure \ref{fig:spec_norm} shows that Spectral Clustering and GCN have qualitatively similar accuracy on simulated data as we vary the decay rate $\lambda$ (the same $\lambda$ is used for all nodes). As expected from Eq.~\eqref{eq:errorbound} and Proposition~\ref{prop:opt}, the optimal decay rate $\lambda$ increases as we increase the number of nodes $n$, as does the value of $\lambda$ that minimizes the spectral norm $\|\hat{A} - P\|$. Although the optimal decay rate is consistently larger than the one minimizing the spectral norm (which upper-bounds the relative error as in Eq.~\eqref{eq:errorbound}), GCN's accuracy is more correlated with the spectral norm, which is the first singular value of the smoothed adjacency matrix.

We then perform a grid search for the optimal decay matrix $\Lambda$ on simulated data with $n = 200$ nodes and change probabilities $\varepsilon_1 = 0.05$ and $\varepsilon_2 = 0.1$. As shown in Figure \ref{fig:heat_map}, GCN and Spectral Clustering achieve high accuracy. Cluster 2, which has a higher $\varepsilon_2$, has larger decay rate $\Lambda_{2,2}$, as expected from Proposition~\ref{prop:opt}, for both GCN and Spectral Clustering.

\section{Decay Rates as RNNs}
Although searching for the optimal decay matrix in Spectral Clustering can result in good performance, this method is expensive: the grid search for the optimal decay matrix can be time-consuming, and the time complexity of calculating the spectral norm is $\mathcal{O}(n^3)$. 
In this section, we propose two neural network architectures, RNNGCN and TRNNGCN, that use a single decay rate $\lambda$ and decay matrix $\Lambda$, respectively, and then show they perform well on simulated data.


\subsubsection{RNNGCN}

\begin{algorithm}[htp]
\SetAlgoLined
\SetAlgoLined
\SetKwInOut{Input}{Input}
\SetKwInOut{Output}{Output}
\Input{Temporal graph $(A_1,A_2,...,A_T)$,\\ Membership matrix of training data $\Theta^{train}_T$}
\Output{Membership matrix estimate $\hat{\Theta}_T$}

 $\hat{A}=A_0$, $H_0=I_N$\;
\For{\text{iteration} $i=1,...,I$}{

\For{$ t=2,...,T$ }{$\hat{A}_t=(1-\lambda)\hat{A}_{t-1}+\lambda {A}_t$}

$H^{(1)}=\sigma_1(\hat{A}_TH^{(0)}W^{(1)})$\\
$H^{(2)}=\sigma_2(\hat{A}_TH^{(1)}W^{(2)})$\\
CrossEntropyLoss($H^{train}$, ${\Theta}_T^{train}$)\\
Backward()
}
$\hat{\Theta}_T=\text{Onehot}(\argmax_{1\leq j \leq n}H_{jk}^{(2)})$
 \caption{RNNGCN}
 \label{algo:RNNGCN}
\end{algorithm}
The RNNGCN model uses a single decay rate $\lambda\in[0,1]$ as the RNN parameter. RNNGCN first uses a Recurrent Neural Network to learn the decay rate, then uses a two-layer GCN to cluster the weighted graphs.
The formal model is shown in Algorithm \ref{algo:RNNGCN}, where $\sigma_1$ denotes a ReLU layer and $\sigma_2$ is a Softmax layer.


\subsubsection{Transitional RNNGCN (TRNNGCN)}

The TRNNGCN network is similar to the RNNGCN, but uses a matrix $\Lambda\in[0,1]^{K\times K}$ to learn the decay rates for different pairs of classes. During the training process, the labels (cluster memberships) of the training nodes are known while the labels of other nodes remain unknown, so we use the cluster prediction $\hat{\Theta}_{i-1}$ from each iteration $i-1$ to determine the decay rates for each node in iteration $i$. 
The TRNNGCN model replaces the decay method (line 4 of Algorithm 1) with
\begin{equation*}
    \hat{A}_t=(1-\hat{\Theta}_{i-1} \Lambda (\hat{\Theta}_{i-1})^T) \odot \hat{A}_{t-1}+\hat{\Theta}_{i-1} \Lambda (\hat{\Theta}_{i-1})^T \odot \hat{A}_t, 
\end{equation*}
where $\odot$ denotes element-wise multiplication. After each iteration, it calculates $\hat{\Theta}_i$ as the input of the next iteration. 




\subsubsection{Empirical Validation}
We validate the performance of RNNGCN and TRNNGCN on data generated by the dynamic stochastic block model. Our graph has 200 nodes, 23190 edges, 50 time steps and 2 clusters. The probabilities of forming an edge between two nodes of the same or different clusters are $\alpha = 0.02$ and $\tau\alpha = 0.001$, respectively, and a node changes its cluster membership with probability $\varepsilon_1 = 0.05$ and $\varepsilon_2 = 0.1$ for clusters 1 and 2 respectively.

\begin{figure*}[ht]
    \centering
    \includegraphics[width=0.27\textwidth,trim={0.3cm 0 0.8cm 0.3cm}, clip]{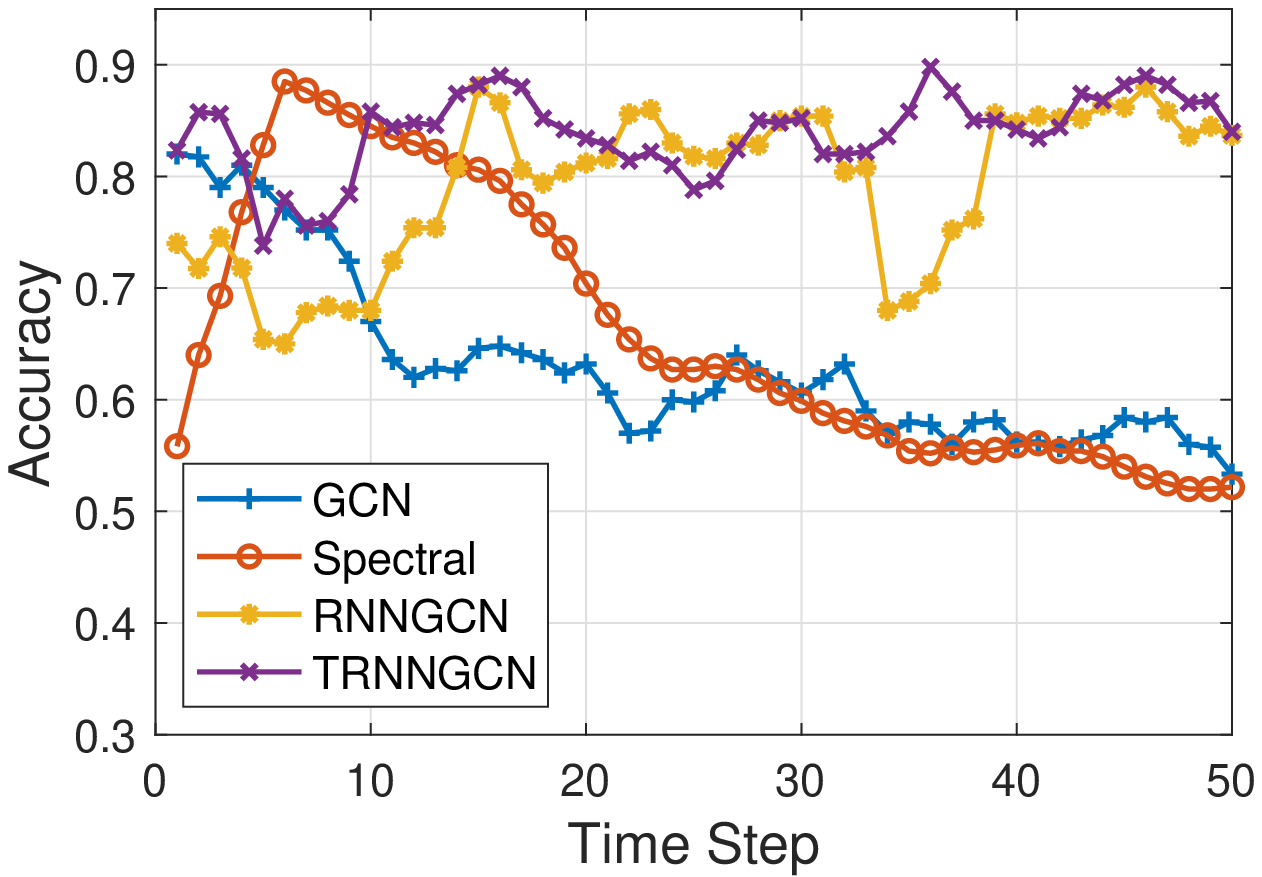}
    \includegraphics[width=0.27\textwidth,trim={0.3cm 0 0.8cm 0.3cm}, clip]{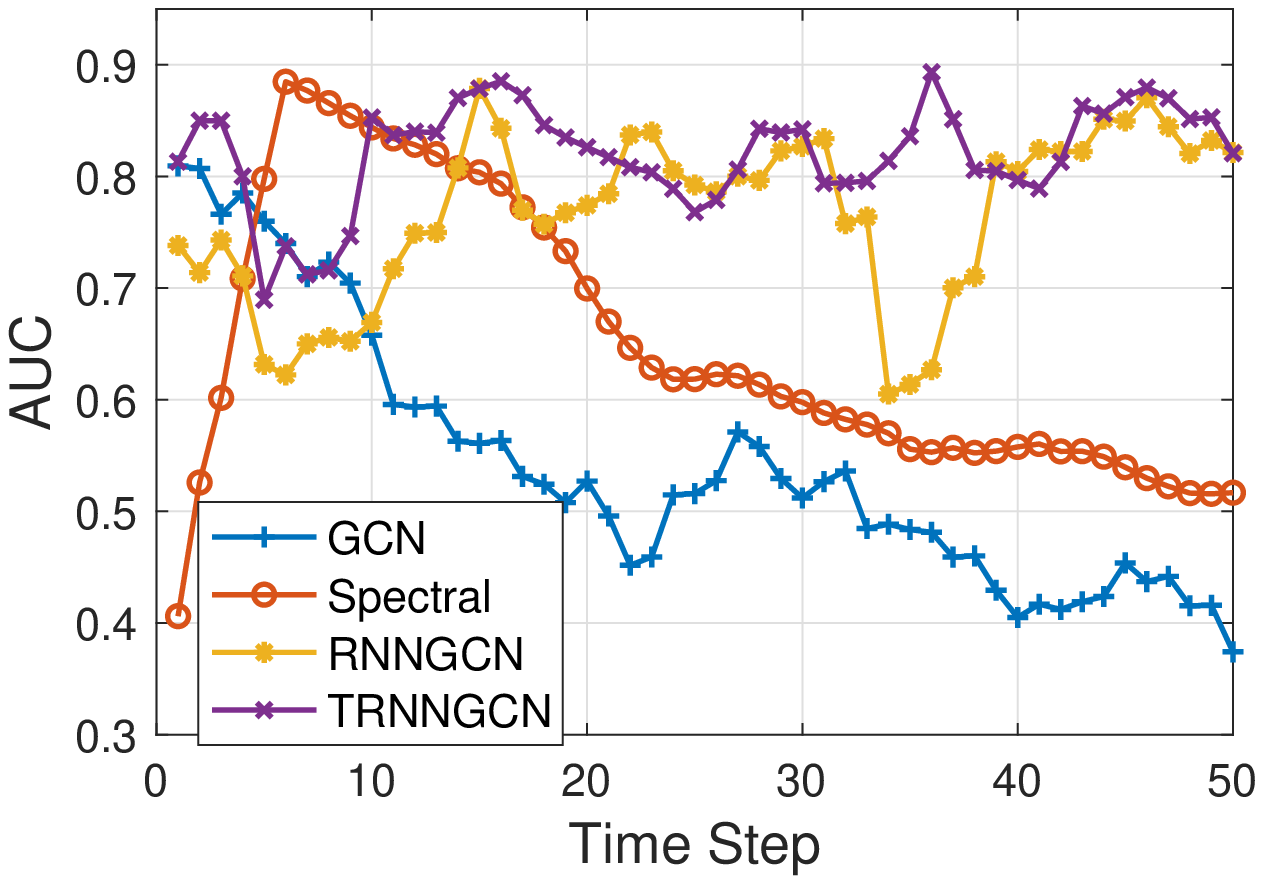}
    \includegraphics[width=0.27\textwidth,trim={0.3cm 0 0.8cm 0.3cm}, clip]{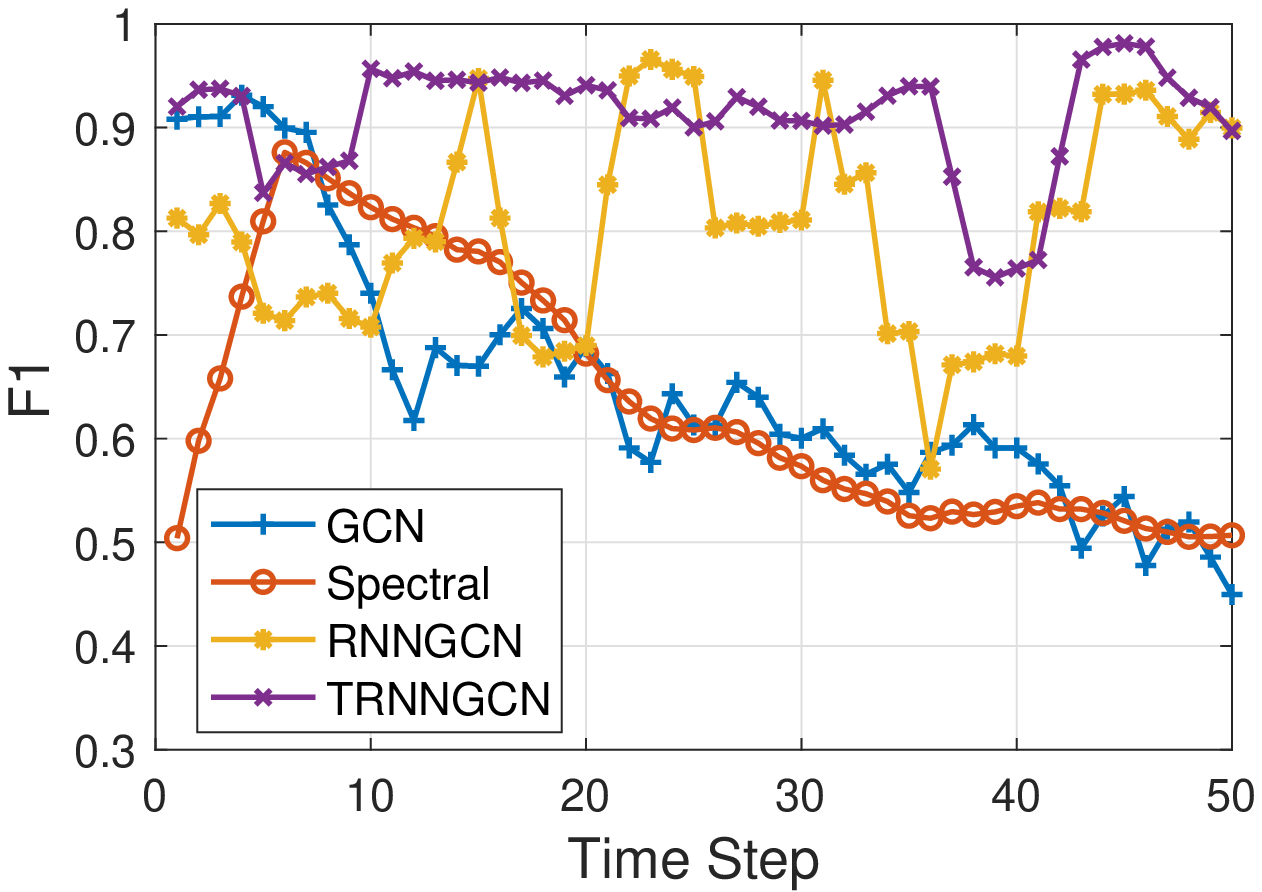}
    \caption{On simulated data with heterogeneous cluster transition probabilities, TRNNGCN and RNNGCN, which use optimized decay rates to account for historical information, outperform the static GCN and Spectral Clustering methods. TRNNGCN slightly outperforms RNNGCN.}
    \label{fig:simulate}
\end{figure*}

Figure \ref{fig:simulate} compares the RNNGCN and TRNNGCN performance with the static Spectral Clustering and GCN methods. For better visualization, the value at each time step is averaged with the 2 timesteps immediately before and after. The performance of GCN and Spectral Clustering decreases over time, as in later timesteps they use accumulated historical information that may no longer be relevant. RNNGCN and TRNNGCN show consistently high performance over time, indicating that they optimally utilize historical information. 
On average over time, TRNNGCN leads to 5\% accuracy and AUC (area under the ROC curve) improvement, and a 10\% higher F1-score, than RNNGCN, due to using a lower decay rate for the class with smaller change probability.

\section{Experiments}

In this section, we validate the performance of RNNGCN and TRNNGCN on real datasets, compared to state-of-the-art baselines. We first describe the datasets used and the baselines considered, and then present our results.

\subsection{Datasets}
We conducted experiments on five real datasets, as shown in Table \ref{tab:datasets}, which have the properties shown in Table \ref{tab:dataattr}. All datasets have edges that form at different times, although only nodes in DBLP-E change their class (cluster membership) over time. 
We include four datasets with separate, time-varying features associated with each node (DBLP-3, DBLP-5, Brain and Reddit) to test RNNGCN's and TRNNGCN's ability to generalize to datasets with node features.

\begin{table}[h]
    \centering
    
    \begin{tabular}{ccccc}
    \hline
         Dataset& Nodes &Edges & Time Steps &Classes \\
    \hline
          DBLP-E & 6942 & 327392 & 14 &2 \\
          DBLP-3 & 4257 & 23540 & 10 &3\\
          DBLP-5 & 6606 & 42815 & 10 & 5\\
          Brain & 5000 & 1955488 & 12 &10\\
          Reddit & 8291 & 264050 & 10 &4\\
    \hline
    \end{tabular}

    \caption{Real datasets used to evaluate our methods.}
    \label{tab:datasets}
\end{table}

\begin{table}[h]
    \centering
    \begin{tabular}{ccccc}
    \hline
         Dataset&  Dynamic Edge & Dynamic Class & Features\\
    \hline
          DBLP-E &  $\surd$ & $\surd$ &$\times$\\
          DBLP-3 &  $\surd$ & $\times$ &100\\
          DBLP-5 &  $\surd$ & $\times$ & 100\\
          Brain &  $\surd$ & $\times$ &20\\
          Reddit &  $\surd$ & $\times$ &20\\
    \hline
    \end{tabular}
    \caption{Properties of datasets in Table~\ref{tab:datasets}.}
    \label{tab:dataattr}
\end{table}

\subsubsection{DBLP-E} dataset is extracted from the computer science bibliography website DBLP\footnote{https://dblp.org}, which provides open bibliographic information on major computer science journals and conferences. Nodes represent authors, and edges represent co-authorship from 2004 to 2018. Each year is equivalent to one timestep, and co-author edges are added in the year a coauthored paper is published. Labels represent the author research area (``computer networks'' or ``machine learning'') and may change as authors switch their research focus. 

\subsubsection{DBLP-3 \& DBLP-5} use the same node and edge definitions as DBLP-E, but also include node features extracted by \texttt{word2vec}~\cite{mikolov2013efficient} from the authors' paper titles and abstracts. The authors in DBLP-3 and DBLP-5 are clustered into three and five classes (research areas) respectively that do not change over time.

\subsubsection{Reddit} dataset is generated from Reddit\footnote{https://www.reddit.com/}, a social news aggregation and discussion website. The nodes represent posts and two posts are connected if they share keywords. Node features are generated by \texttt{word2vec} on the post comments~\cite{hamilton2017inductive}.

\subsubsection{Brain} dataset is generated from functional magnetic resonance imaging (fMRI) data\footnote{https://tinyurl.com/y4hhw8ro}. Nodes represent cubes of brain tissue, and two nodes are connected if they show similar degrees of activation during the time period. Node features are generated by principal component analysis on the fMRI.

\subsection{Baselines and Metrics}
We compare our RNNGCN and TRNNGCN with multiple baselines. GCN, GAT~\cite{velivckovic2017graph} and GraphSage~\cite{hamilton2017inductive} are supervised methods that include node features, while Spectral Clustering is unsupervised without features; all of these methods ignore temporal information. DynAERNN~\cite{goyal2020dyngraph2vec} is an unsupervised method, and GCNLSTM~\cite{chen2018gc} and EGCN~\cite{pareja2020evolvegcn} are supervised methods, which all utilize temporal information of both graphs and features. We evaluate the performance of methods with the standard accuracy (ACC), area under the ROC curve (AUC) and F1-score classification metrics.

\subsection{Experiment Settings}
We divide each dataset into 70\% training/ 20\% validation/ 10\% test points. Each method uses two hidden Graph Neural Network layers (GCN, GAT, GraphSage, etc.) with the layer size equal to the number of classes in the dataset. We add a dropout layer between the two layers with dropout rate $0.5$. We use the Adam optimizer with learning rate $0.0025$. Each method is trained with $500$ iterations. 

For static methods (GCN, GAT, GraphSage and Spectral Clustering) we first accumulate the adjacency matrices of graphs at each time step, then cluster on the normalized accumulated matrix. DynAERNN, GCNLSTM, and EGCN use the temporal graphs and temporal node features as input. For our RNNGCN and TRNNGCN, we use the temporal graphs and the node features at the last time step as input. The code of all methods and datasets are publicly available\footnote{https://github.com/InterpretableClustering/InterpretableClustering}. 
\subsection{Experimental Results}
\begin{figure}[ht]
    \centering
    \includegraphics[width=0.44\textwidth]{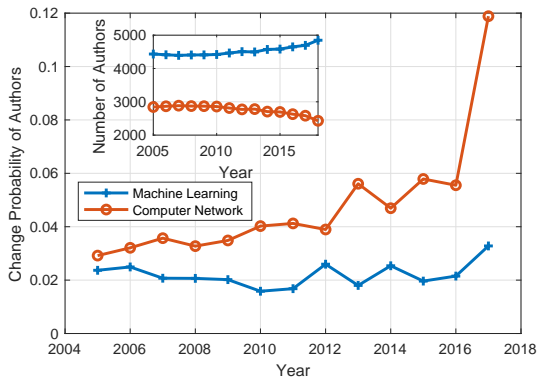}
    \caption{The two classes in DBLP-E exhibit different change probabilities, with computer network authors more likely to change their labels to machine learning. This trend accelerates after 2014.}
    \label{fig:dblpe_num}
\end{figure}
\subsubsection{Node Classification with Temporal Labels}
We first compare the predictions of temporally changing labels in DBLP-E. Figure \ref{fig:dblpe_num} shows the number of authors in the Machine Learning and Computer Network fields in the years 2004-2018, as well as the probabilities that authors in each class change their labels. We observe that (i) the classes have different change probabilities (with users more likely to move from computer networks to machine learning) and (ii) the change probabilities evolve over time, with more users migrating to machine learning since 2013. This dataset thus allows us to test RNNGCN's and TRNNGCN's abilities to adapt the optimal decay rate for each class.

\begin{figure*}[ht]
    \centering
    \includegraphics[width=0.23\textwidth,trim={0.25cm 0cm 0.8cm 0.5cm}, clip]{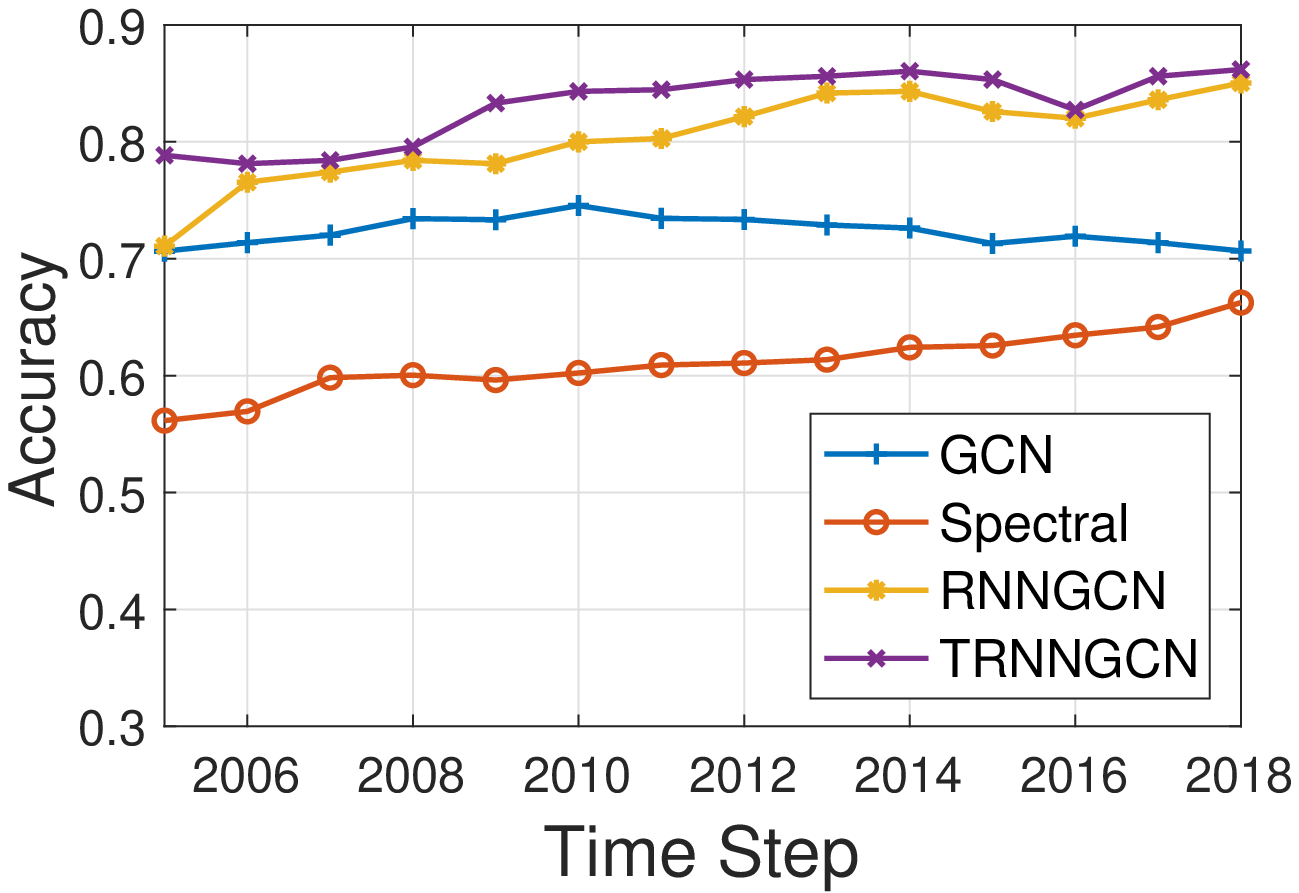}
    \includegraphics[width=0.23\textwidth,trim={0.25cm 0cm 0.8cm 0.5cm}, clip]{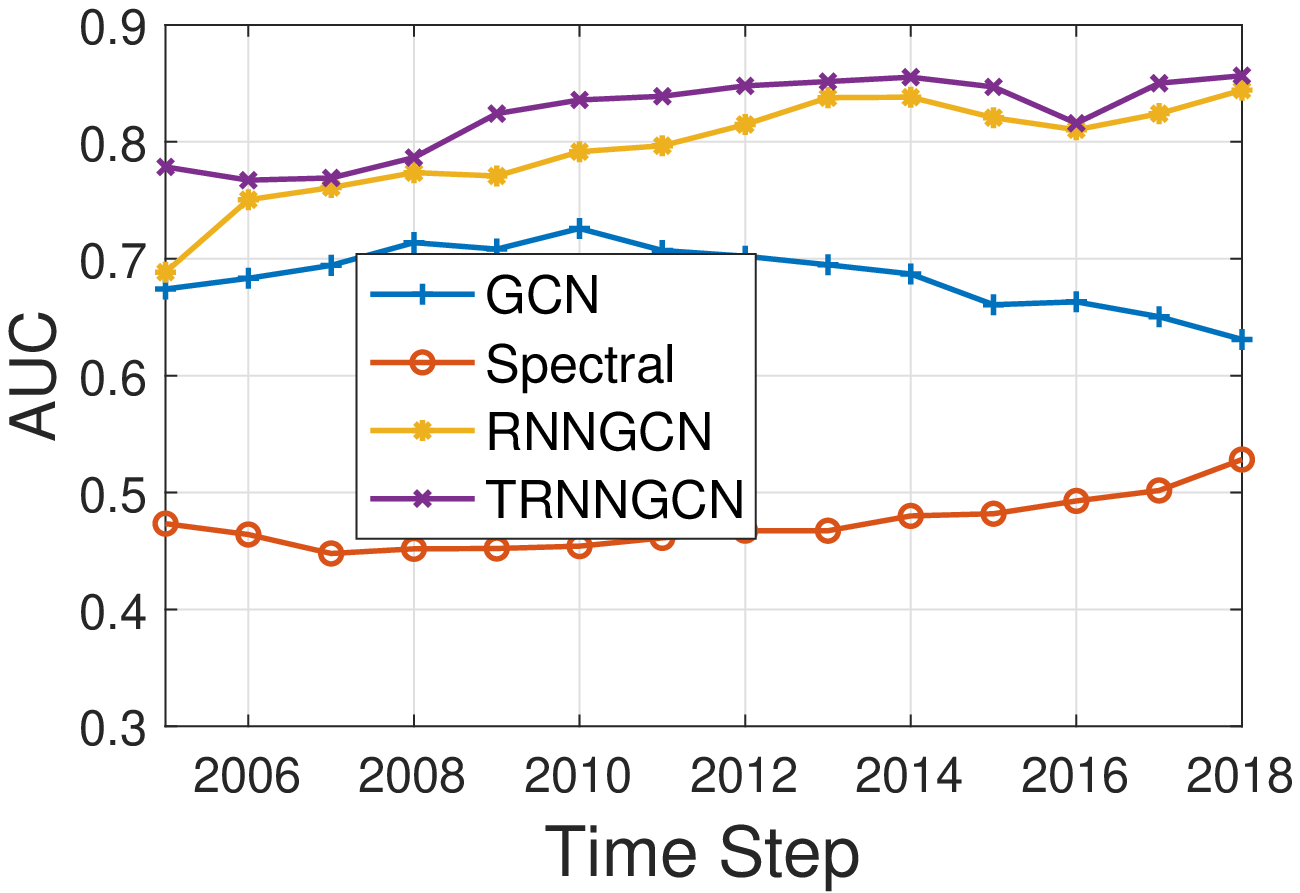}
    \includegraphics[width=0.23\textwidth,trim={0.25cm 0cm 0.8cm 0.5cm}, clip]{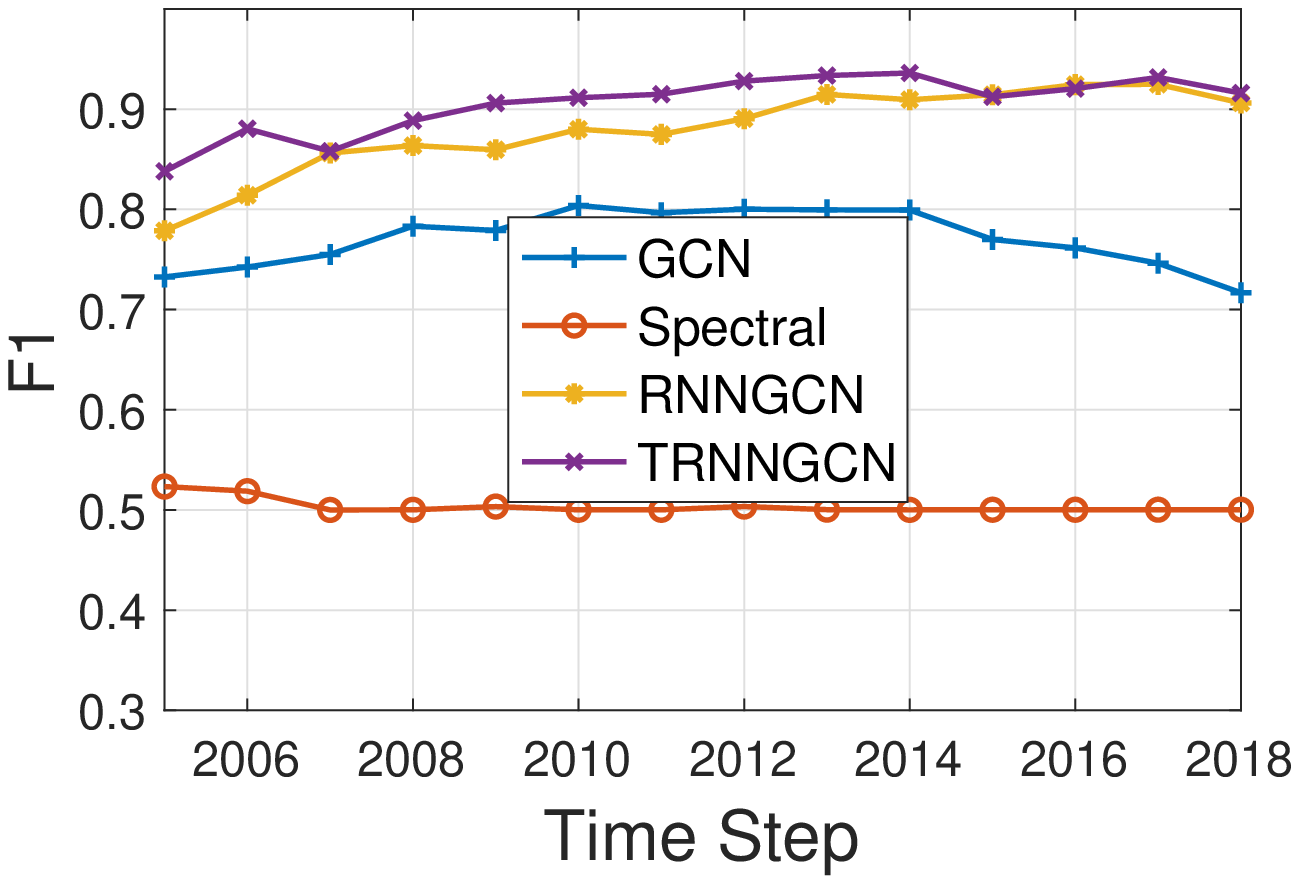}
    \includegraphics[width=0.23\textwidth,trim={0.25cm 0cm 1.2cm 0cm}, clip]{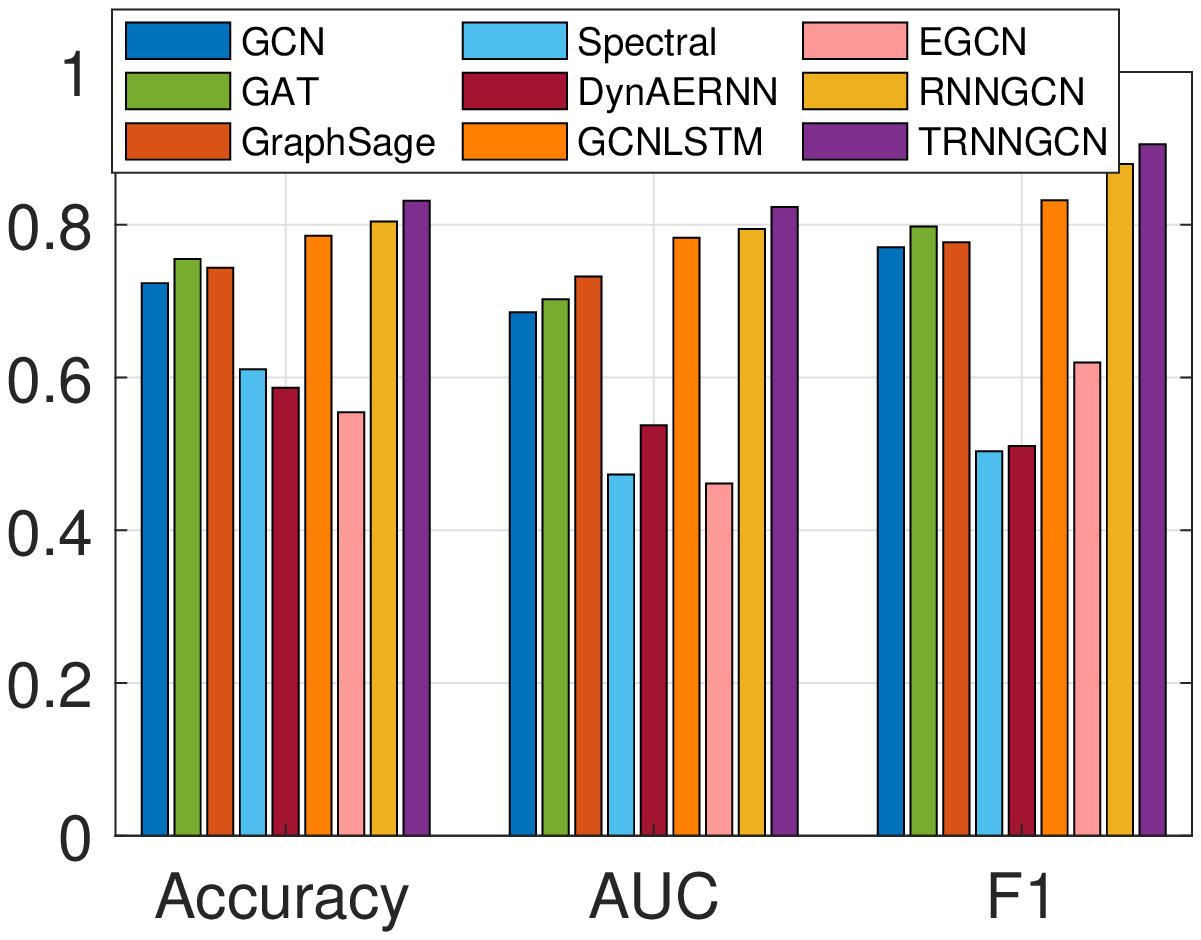}
    \caption{On DBLP-E data, TRNNGCN and RNNGCN outperform the static GCN and spectral clustering methods and show better performance over time, indicating that they can optimize their decay rates to account for historical information. TRNNGCN slightly outperforms RNNGCN.}
    \label{fig:dblpe}
\end{figure*}

\begin{table*}[ht]
    \centering
    
    \begin{tabular}{ccccccccccccc}
    \hline
         & \multicolumn{3}{c|}{DBLP-3} &\multicolumn{3}{c|}{DBLP-5}  &\multicolumn{3}{c|}{Reddit}  &\multicolumn{3}{c}{Brain}\\
    \hline
          & ACC&AUC&F1& ACC&AUC&F1&ACC&AUC&F1&ACC&AUC&F1 \\
    \hline
          GCN&71.6&62.2&35.8&64.9&51.0&\textbf{\textit{58.7}}&31.0&24.5&47.4&35.2&25.0&80.3\\
          GAT&70.9&59.4&57.8&62.3&48.2&51.4&16.8&4.8&50.0&34.6&26.4&81.6\\
          GraphSage &74.5&63.6&55.0&\textbf{\textit{66.5}}&53.9&55.1&29.2&20.7&42.5&\textbf{44.2}&\textbf{\textit{41.9}}&\textbf{86.7}\\
          Spectral&45.7&51.6&51.2&43.8&45.6&51.3&30.1&24.1&51.7&42.7&41.7&68.1\\
          DynAERNN &48.1&54.2&50.8&33.1&39.1&51.2&31.1&\textbf{31.7}&\textbf{54.1}&20.5&20.3&55.6\\
          GCNLSTM&74.5&63.6&48.4&\textbf{\textit{66.5}}&53.2&54.6&31.9&25.5&46.1&38.8&32.9&\textbf{\textit{85.9}}\\
          EGCN&72.3&60.7&48.1&63.2&50.6&53.2&28.3&12.5&50.0&28.6&26.1&73.7\\
          RNNGCN&\textbf{\textit{75.9}}&\textbf{\textit{68.0}}&\textbf{\textit{66.7}}&65.7&\textbf{\textit{55.4}}&58.6&\textbf{33.6}&20.5&49.7&41.0&38.6&84.7\\
          TRNNGCN&\textbf{78.0}&\textbf{72.1}&\textbf{73.8}&\textbf{67.4}&\textbf{57.9}&\textbf{63.5}&\textbf{33.6}&\textbf{\textit{25.6}}&\textbf{\textit{53.2}}&\textbf{\textit{43.8}}&\textbf{42.4}&85.7\\
    \hline
    \end{tabular}

    \caption{TRNNGCN consistently achieves the best (bold) or second-best (bold italics) accuracy (ACC), area under the ROC curve (AUC), and F1 score compared to baseline algorithms on four temporal datasets.}
    \label{tab:result_temporal}
\end{table*}

Figure \ref{fig:dblpe} shows the performance of RNNGCN and TRNNGCN in DBLP-E. Similar to Figure~\ref{fig:simulate}'s result on the simulated data, GCN's performance decreases over time as the accumulated effect of class change increases. Spectral Clustering consistently performs poorly since it cannot learn the high-dimensional patterns of the DBLP-E graph. RNNGCN and TRNNGCN maintain good performance and fully utilize the temporal information. As the two classes have different change probabilities, TRNNGCN learns a better decay rate and performs better. We further show the average accuracy, AUC, and F1-score for each baseline method at each timestep; RNNGCN and TRNNGCN consistently outperform the other baselines. GCNLSTM comes the closest to matching their performance. GCNLSTM uses a LSTM layer to account for historical information, which is similar to our methods but lacks interpretability as the LSTM operates on the output of the GCN layer (which is not readily interpretable) instead of the original graph adjacency information. We use a RNN instead of LSTM layer in our algorithms for computational efficiency.

\textbf{Node Classification with Temporal Features}
Although our analysis is based on dynamic networks without features, Table \ref{tab:result_temporal}'s performance results demonstrate the applicability of our RNNGCN and TRNNGCN algorithms to the DBLP-3, DBLP-5, Brain, and Reddit datasets with node features. TRNNGCN achieves the best or second-best performance consistently on all datasets, even outperforming EGCN and GCNLSTM, which unlike TRNNGCN fully utilize the historical information of node features. While GraphSAGE shows good accuracy, AUC, and F1-score on the Brain dataset, no other baseline method does well across all three metrics for any other dataset. RNNGCN performs second-best on DBLP-3 but worse on the other datasets, likely because those datasets have more than three classes, which would likely have different optimal decay rates. TRNNGCN can account for these differences, but RNNGCN cannot.

We further highlight the importance of taking into account historical information by noting that the static baselines (GCN, GAT, GraphSAGE, and spectral clustering) generally perform poorly compared to the dynamic baselines (DynAERNN, GCNLSTM, EGCN). DynAERNN can perform significantly worse than GCNLSTM and EGCN, likely because it is an unsupervised method that cannot take advantage of labeled training data. Thus, RNNGCN and TRNNGCN's good performance is likely due to their ability to optimally take advantage of historical graph information, even if they cannot use historical node feature information.

\section{Conclusion}

This work proposes RNNGCN and TRNNGCN, two new neural network architectures for clustering on dynamic graphs. These methods are inspired by the insight that RNNs progressively decrease the weight placed on their inputs over time according to a learned decay rate parameter. 
This decay rate can in turn be interpreted as the importance of historical connection information associated with each community or cluster in the graph.
We show that decaying historical connection information can achieve almost exact recovery when used for spectral clustering on dynamic stochastic block models, and that the RNN decay rates on simulated data match the theoretically optimal decay rates for such stochastic block models. We finally validate the performance of RNNGCN and TRNNGCN on a range of real datasets, showing that TRNNGCN consistently outperforms static clustering methods as well as previously proposed dynamic clustering methods. This performance is particularly remarkable compared to dynamic clustering methods that account for historical information of both the connections between nodes and the node features; TRNNGCN ignores the historical feature information. We plan to investigate neural network architectures that reveal the importance of these dynamic node features in our future work. Much work also remains on better establishing 
the models' interpretability. 



\newpage



\section*{Acknowledgements}
This research was partially supported by NSF grant CNS-1909306. The authors would like to thank Jinhang Zuo, Mengqiu Teng and Xiao Zeng for their inputs to the work.

\bibliography{ref}

\newpage

\appendix
\section{TRNNGCN}
 \begin{algorithm}[!h]
 \SetAlgoLined
 \SetKwInOut{Input}{Input}
 \SetKwInOut{Output}{Output}
 \Input{Temporal graph $(A_1,A_2,...,A_T)$,\\ Membership matrix of training data $\Theta_T^{train}$}
 \Output{Membership matrix estimate $\hat{\Theta}_T$}

  $\hat{A}=A_0$, $H_0=I_N$\;
 \For{\text{iteration} $i=1,...,I$}{
 \For{$ t=2,...,T$ }{
 $ \hat{A}_t=(1-\hat{\Theta}_{i-1} \Lambda (\hat{\Theta}_{i-1})^T) \circ \hat{A}_{t-1}+\hat{\Theta}_{i-1} \Lambda (\hat{\Theta}_{i-1})^T \circ \hat{A}_t.$}

 $H^{(1)}=\sigma_1(\hat{A}_TH^{(0)}W^{(1)})$\\
 $H^{(2)}=\sigma_2(\hat{A}_TH^{(1)}W^{(2)})$\\
 CrossEntropyLoss($H_i^{train}$, ${\Theta}_i^{train}$)\\
 Backward()\\
 $\hat{\Theta}_{i}=\text{Onehot}(\argmax_{1\leq j \leq n}H_{jk}^{(2)})$\\
 }
 $\hat{\Theta}_T=\text{Onehot}(\argmax_{1\leq j \leq n}H_{jk}^{(2)})$

 \caption{TRNNGCN}
\label{algo:TRNNGCN}
\end{algorithm}

\section{Recovery Requirements}
The goal of clustering or community detection is to recover the membership $\Theta$ by observing the graph $G$, up to some level of accuracy. We next define the relative error and rate of recovery.

\begin{definition}
[Relative error of $\hat{\Theta}$]
The relative error of a clustering estimate $\hat{\Theta}$ is
\begin{equation}
E(\hat{\Theta},\Theta)=\frac{1}{n} \min_{\pi\in \mathcal{P}} \|\hat{\Theta} \pi-\Theta\|_0,    
\end{equation}
where $n$ denotes the number of nodes in graph $G$, $\mathcal{P}$ is the set of all $K\times K$ permutation matrices and $\|.\|_0$ counts the number of non-zero elements of a matrix.
\end{definition}

\begin{definition}
[Agreement or accuracy of $\hat{\Theta}$]
\begin{equation}
A(\hat{\Theta},\Theta)=1-E(\hat{\Theta},\Theta).    
\end{equation}

\end{definition}

\begin{definition}{(Recovery rate of $\hat{\Theta}$)}
A clustering estimate $\hat{\Theta}$ achieves 

\begin{itemize}
\item Exact Recovery when
\begin{equation*}
    \mathbb{P}\{A(\hat{\Theta},\Theta)=1\}=1-o(1),
\end{equation*}
\item Almost Exact Recovery when 
\begin{equation*}
    \mathbb{P}\{A(\hat{\Theta},\Theta)=1-o(1)\}=1-o(1),
\end{equation*}

\item Partial Recovery when

\begin{equation*}
    \mathbb{P}\{A(\hat{\Theta},\Theta)\geq \alpha\}=1-o(1),\alpha\in(\frac{1}{k},1).
\end{equation*}
\end{itemize}
\end{definition}

\begin{lemma}
\label{lemma:almost}
If 

\begin{equation}
    \mathbb{E}(A(\hat{\Theta},\Theta))=1-o(1),
\end{equation}where $\mathbb{E}$ denotes the expectation, then $\hat{\Theta}$ achieves almost exact recovery.
\end{lemma}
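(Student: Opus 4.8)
The plan is to apply Markov's inequality to the nonnegative relative error $E(\hat{\Theta},\Theta) = 1 - A(\hat{\Theta},\Theta)$. First I would observe that the hypothesis $\mathbb{E}(A(\hat{\Theta},\Theta)) = 1 - o(1)$ is equivalent, by linearity of expectation, to $\mathbb{E}(E(\hat{\Theta},\Theta)) = o(1)$; write $\mu_n := \mathbb{E}(E(\hat{\Theta},\Theta))$, so that $\mu_n \to 0$ as $n \to \infty$. Since $E(\hat{\Theta},\Theta) = \frac{1}{n}\min_{\pi\in\mathcal{P}} \|\hat{\Theta}\pi - \Theta\|_0 \geq 0$ is a nonnegative random variable, Markov's inequality applies directly.

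Next I would choose a threshold that tends to zero, but slowly enough that the tail probability also vanishes. Setting $a_n := \sqrt{\mu_n}$, Markov's inequality gives
\begin{equation*}
\mathbb{P}\{E(\hat{\Theta},\Theta) \geq a_n\} \leq \frac{\mu_n}{a_n} = \sqrt{\mu_n} = o(1).
\end{equation*}
Equivalently, $\mathbb{P}\{E(\hat{\Theta},\Theta) < a_n\} = 1 - o(1)$. Because $a_n = \sqrt{\mu_n} = o(1)$ as well, on this high-probability event we have $E(\hat{\Theta},\Theta) = o(1)$, i.e.\ $A(\hat{\Theta},\Theta) = 1 - o(1)$. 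This is precisely the statement $\mathbb{P}\{A(\hat{\Theta},\Theta) = 1 - o(1)\} = 1 - o(1)$, which is the definition of almost exact recovery.

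The only real subtlety --- rather than a genuine obstacle --- is making the two occurrences of $o(1)$ precise simultaneously: one inside the event ($A = 1 - o(1)$) and one for the probability of that event ($= 1 - o(1)$). The geometric-mean threshold $a_n = \sqrt{\mu_n}$ resolves this cleanly, since it forces both $a_n \to 0$ and $\mu_n/a_n \to 0$; indeed any sequence with $\mu_n \ll a_n \ll 1$ would serve equally well. No concentration inequality or structural property of the dynamic SBM is required here: the lemma is a purely measure-theoretic bridge from convergence in mean to convergence in probability for the accuracy, which lets later results (such as Proposition~\ref{prop:bound}) certify almost exact recovery simply by controlling the expected relative error $\mathbb{E}(E(\hat{\Theta},\Theta))$.
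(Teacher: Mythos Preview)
Your proof is correct. The paper itself states Lemma~\ref{lemma:almost} without proof in the technical appendix, treating it as a standard fact and simply invoking it at the end of the proof of Proposition~\ref{prop:bound}. Your Markov-inequality argument with the threshold $a_n=\sqrt{\mu_n}$ is exactly the canonical way to turn the $L^1$ bound $\mathbb{E}[E(\hat\Theta,\Theta)]=o(1)$ into the high-probability statement required by the definition of almost exact recovery, and your remark about simultaneously controlling both $o(1)$ terms is the only nontrivial point. One very minor technicality: if $\mu_n=0$ for some $n$ the choice $a_n=\sqrt{\mu_n}$ degenerates, but in that case $E(\hat\Theta,\Theta)=0$ almost surely and the conclusion is immediate; alternatively take $a_n=\max(\sqrt{\mu_n},1/n)$.
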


\begin{lemma}
\label{lemma:partial}
If \begin{equation}
    \mathbb{E}(A(\hat{\Theta},\Theta))\geq \alpha-o(1), \alpha\in(\frac{1}{k},1),
\end{equation} then $\hat{\Theta}$ achieves partial recovery.
\end{lemma}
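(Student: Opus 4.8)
The plan is to follow the same first-moment template used for Lemma~\ref{lemma:almost}, working throughout with the nonnegative, $[0,1]$-valued error variable $X := E(\hat{\Theta},\Theta) = 1 - A(\hat{\Theta},\Theta)$. The hypothesis $\mathbb{E}[A(\hat{\Theta},\Theta)] \geq \alpha - o(1)$ translates immediately into the first-moment bound $\mathbb{E}[X] \leq (1-\alpha) + o(1)$. Since $\alpha \in (\tfrac{1}{k},1)$ is a fixed constant, for all large $n$ the expected accuracy lies strictly above the random-guessing level $\tfrac{1}{k}$, so the estimator is informative in expectation; the real task is to promote this to the high-probability statement $\mathbb{P}[A(\hat{\Theta},\Theta) \geq \alpha'] = 1 - o(1)$ for some threshold $\alpha' \in (\tfrac{1}{k},1)$.

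First I would apply Markov's inequality to $X$: for any threshold $t \in (0,1)$,
\[
\mathbb{P}[X \geq t] \leq \frac{\mathbb{E}[X]}{t} \leq \frac{(1-\alpha) + o(1)}{t},
\]
equivalently $\mathbb{P}[A(\hat{\Theta},\Theta) \leq 1 - t] \leq ((1-\alpha)+o(1))/t$, which controls the lower tail of the accuracy in terms of the mean error. This is the natural first move, but it is exactly where the argument parts ways with Lemma~\ref{lemma:almost}.

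The hard part is that in Lemma~\ref{lemma:almost} one has $\mathbb{E}[X] = o(1)$, so taking $t = \sqrt{\mathbb{E}[X]}$ makes both the threshold and the tail probability vanish simultaneously and delivers the $1-o(1)$ guarantee; here $\mathbb{E}[X] \leq 1-\alpha + o(1)$ is a constant bounded away from $0$, so the first moment alone yields only a \emph{constant}-probability lower bound on $A(\hat{\Theta},\Theta)$, never the required $1-o(1)$. To close this gap I would replace the bare Markov bound with concentration of the accuracy around its mean. Because $A(\hat{\Theta},\Theta)$ is an average of $n$ per-node agreement indicators and relabeling a single node perturbs it by $O(1/n)$, a bounded-differences (McDiarmid-type) inequality gives $\mathbb{P}[|A - \mathbb{E}[A]| \geq s] \leq 2\exp(-\Omega(n s^2))$; choosing $s = o(1)$ with $n s^2 \to \infty$ (e.g.\ $s = n^{-1/4}$) yields $A(\hat{\Theta},\Theta) \geq \alpha - o(1)$ with probability $1 - o(1)$, and fixing any constant $\alpha' \in (\tfrac{1}{k}, \alpha)$ absorbs the $o(1)$ slack to give $\mathbb{P}[A(\hat{\Theta},\Theta) \geq \alpha'] = 1 - o(1)$, i.e.\ partial recovery. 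In the setting of this paper the same high-probability control is already available from the spectral-norm concentration of Proposition~\ref{prop:bound} fed through the error bound Eq.~\eqref{eq:errorbound}, which can be substituted for the abstract McDiarmid step. The main obstacle is therefore establishing this concentration (verifying the bounded-differences constants for the specific estimator, or importing the $1-n^{-\nu}$ spectral-norm bound), since the pure first-moment argument that sufficed for almost exact recovery is fundamentally too weak once the mean error no longer vanishes.
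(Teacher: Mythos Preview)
The paper does not supply a proof of Lemma~\ref{lemma:partial}; it is stated in the technical appendix as a preliminary fact and then invoked at the end of the proof of Proposition~\ref{prop:recover_rate}. So there is no ``paper's own proof'' to compare against.

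Your analysis is nonetheless substantively correct on the central point: the hypothesis $\mathbb{E}[A(\hat{\Theta},\Theta)]\ge \alpha - o(1)$ with $\alpha$ a fixed constant in $(1/k,1)$ does \emph{not}, by itself, imply $\mathbb{P}[A(\hat{\Theta},\Theta)\ge \alpha']=1-o(1)$ for any constant $\alpha'\in(1/k,1)$. A one-line counterexample suffices: let the accuracy equal $1$ with probability $\alpha$ and $0$ with probability $1-\alpha$; then its expectation is exactly $\alpha$, yet $\mathbb{P}[A\ge \alpha']=\alpha$ for every $\alpha'\in(0,1]$, which is bounded away from $1$. So the lemma, read literally, is not derivable from the stated hypothesis, and your diagnosis that Markov's inequality cannot close the gap is right.

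Your proposed remedy via McDiarmid, however, has its own gap. The accuracy $A(\hat{\Theta},\Theta)$ is not a sum of independent per-node indicators: $\hat{\Theta}$ is a function of the entire random graph, and the outer minimization over permutations $\pi\in\mathcal{P}$ couples all $n$ coordinates. Changing a single edge can in principle flip the optimal permutation and perturb many rows of $\hat{\Theta}\pi$, so the bounded-differences constants are not automatically $O(1/n)$ without a separate stability argument for the specific estimator. Your second suggestion---importing the $1-n^{-\nu}$ spectral-norm concentration of Proposition~\ref{prop:bound} through Eq.~\eqref{eq:errorbound}---is the sounder route and is in effect how the paper obtains its high-probability recovery guarantees elsewhere. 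In short: you have correctly identified that the lemma as written is too strong to follow from the bare first-moment hypothesis, and that any valid argument must import concentration from outside that hypothesis; just be aware that the McDiarmid step you sketch does not come for free.
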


\section{Proof of Proposition 1}

\begin{lemma}
In \cite{abbe2017community}, by genie-aided hypothesis test, if the link probability $\alpha=\mathcal{O}(\frac{\log n}{n})$, at time step $t$, by spectral estimator, we have
$$\mathbb{P}(\hat{\Theta}_{t,i}=\Theta_{t,i})=\left\{
\begin{aligned}
1-o(1) & , & \frac t 2\geq \max {C_i}\\
o(1) & , & \frac t 2 < \max {C_i},
\end{aligned}
\right.$$
where $\Theta^t_i$ denotes the i-th row (membership of node $i$) of membership matrix $\Theta$ at timestep $t$  and $C_i$ denotes the most recent change time of node $i$.
\end{lemma}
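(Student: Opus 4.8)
The plan is to adapt \citet{abbe2017community}'s genie-aided hypothesis test to the time-accumulated adjacency matrix $A=\sum_{s=1}^{t}A_s$ that the spectral estimator sees at time $t$. First I would reveal, via a genie, the current memberships of all nodes except $i$, reducing the classification of node $i$ to a single hypothesis test on its accumulated connection vector $\{A_{ij}\}_{j\ne i}$: the spectral estimator effectively assigns $i$ to the cluster it is most strongly connected to in $A$, so its per-node success is governed by a comparison of accumulated connection counts, which is exactly the quantity the genie-aided test evaluates. I would also first observe that under $\varepsilon_j=\mathcal{O}(\log n/n)$ a fixed node changes clusters more than once only with probability $\mathcal{O}((\log n/n)^2)$, so with high probability every moving node moves exactly once; hence $C_i$ reduces to a single time $\tau_i:=\max C_i$, with $i$ in its old cluster on $[1,\tau_i)$ and in its current cluster $k^\ast$ on $[\tau_i,t]$.

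Next I would compute the expected accumulated connections. For a node $j$ that has remained in cluster $m$, the intra/inter rates $\alpha$ and $\tau\alpha$ give $\mathbb{E}[A_{ij}]=(\text{steps }i\text{ spent in }m)\,\alpha+(\text{steps }i\text{ spent outside }m)\,\tau\alpha$. Summing over the $\Theta(n)$ nodes of each cluster, node $i$'s expected accumulated connection to its current cluster $k^\ast$ exceeds that to its old cluster by a margin proportional to $\big[(t-\tau_i)-\tau_i\big](\alpha-\tau\alpha)\,\Theta(n)=(t-2\tau_i)(\alpha-\tau\alpha)\,\Theta(n)$, while connections to any cluster $i$ never visited are always dominated. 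This margin is strictly positive exactly when $\tau_i<t/2$ and strictly negative when $\tau_i>t/2$; in the latter case the accumulated edges favor the \emph{old} cluster $k_{\mathrm{old}}\ne k^\ast$, so the estimator returns the wrong label. This accumulation computation, which converts the recency of the last change into the criterion $\max C_i$ versus $t/2$, is the conceptually new ingredient relative to the static result.

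Finally I would convert the sign of the margin into the stated probabilities. With $\alpha=\Theta(\log n/n)$ and cluster sizes $\Theta(n)$, each per-cluster connection count is a sum of (conditionally) independent Bernoullis with mean $\Theta(t\log n)$, and a Chernoff/Bernstein bound shows the probability that the observed ordering contradicts the expected margin decays like $\exp(-\Theta(I))$, where $I$ is a Chernoff--Hellinger divergence of order $\Theta(\log n)=\omega(1)$. Hence when $\tau_i<t/2$ the estimator returns $k^\ast$ with probability $1-o(1)$, and when $\tau_i>t/2$ it returns $k_{\mathrm{old}}$ with probability $1-o(1)$, so $\mathbb{P}(\hat{\Theta}_{t,i}=\Theta_{t,i})=o(1)$. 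The hard part is not the concentration bound itself but two supporting steps: justifying that the \emph{spectral} estimator actually attains the genie-aided per-node test (rather than merely being globally consistent), and controlling the dependencies introduced both by edge accumulation across time steps and by conditioning on the genie's revealed labels, so that the independent-Bernoulli divergence estimate remains valid.
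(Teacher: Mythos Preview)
The paper does not actually prove this lemma: it is stated in the appendix with the phrase ``In \cite{abbe2017community}, by genie-aided hypothesis test \ldots'' and then used immediately as a black box in the proof of Proposition~1. There is no argument in the paper beyond that citation, so there is no ``paper's own proof'' to compare against.

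Your proposal therefore goes considerably further than the paper does: you are supplying the argument that the paper merely gestures at. The structure you lay out --- reducing node $i$'s classification to a comparison of accumulated connection counts to each cluster, observing that under $\varepsilon_j=\mathcal{O}(\log n/n)$ a node changes at most once with high probability, computing the expected margin as proportional to $(t-2\tau_i)(1-\tau)\alpha\,\Theta(n)$, and then applying a Chernoff-type bound with information of order $\Theta(\log n)$ --- is exactly the natural adaptation of Abbe's static genie-aided test to the accumulated matrix $A=\sum_{s\le t}A_s$, and it makes explicit why the threshold $t/2$ versus $\max C_i$ arises. You also correctly flag the two genuine soft spots: (i) equating the spectral estimator's per-node decision with the genie-aided MAP test is a heuristic step (Abbe's survey justifies it for the static SBM, but the transfer to the time-accumulated, non-binary matrix is not automatic), and (ii) the entries of $A$ are sums of Bernoullis across time rather than single Bernoullis, so the concentration step needs to treat them as Binomial/Poisson-type variables. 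Neither of these is addressed by the paper either; your sketch is already more careful than what the paper offers.
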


\begin{prop}[Partial Recovery of Spectral Clustering]\label{prop:recover_rate_appendix}
When nodes change their cluster membership over time with change probabilities $\mathcal{O}(\frac{\log n}{n})$, Spectral Clustering recovers the true clusters at time $T$ with relative error $\mathcal{O}(\frac{\log n}{n} T)$. 
\end{prop}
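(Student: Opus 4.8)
The plan is to bound the \emph{expected} relative error by summing per-node misclassification probabilities and then using the preceding lemma to convert each such probability into a statement about how recently that node last changed clusters. First I would write
\begin{equation}
\mathbb{E}\!\left[E(\hat{\Theta}_T,\Theta_T)\right] \le \frac{1}{n}\sum_{i=1}^{n}\mathbb{P}\!\left[\hat{\Theta}_{T,i}\neq\Theta_{T,i}\right],
\end{equation}
which follows from the definition of relative error as the permutation-optimal fraction of mislabeled rows together with linearity of expectation (fixing the alignment supplied by the spectral estimator gives the upper bound). The preceding lemma then says that, conditioned on node $i$'s most recent change time $C_i$, this per-node probability is $o(1)$ when $C_i \le T/2$ and $1-o(1)$ when $C_i > T/2$. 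Hence the dominant contribution comes from nodes that changed cluster during the second half of the horizon, and up to an $o(1)$ residual the per-node probability equals $\mathbb{P}[C_i > T/2]$.

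Next I would bound $\mathbb{P}[C_i > T/2]$, the probability that node $i$ changes cluster at least once in the $T/2$ most recent time steps. Since the membership process is Markov with per-step change probability $\varepsilon_j=\mathcal{O}(\tfrac{\log n}{n})$ in every cluster $j$, a union bound over the steps $\lfloor T/2\rfloor+1,\dots,T$ gives
\begin{equation}
\mathbb{P}[C_i > T/2] \;\le\; \frac{T}{2}\,\max_j \varepsilon_j \;=\; \mathcal{O}\!\left(\frac{\log n}{n}\,T\right).
\end{equation}
Averaging over all $n$ nodes preserves this bound, so $\mathbb{E}[E(\hat{\Theta}_T,\Theta_T)] = \mathcal{O}(\tfrac{\log n}{n}T)$ once the residual is controlled. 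A high-probability version, if desired, then follows from Markov's inequality, or from a Chernoff bound applied to the (conditionally independent) per-node misclassification indicators.

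The main obstacle is reconciling the randomness of the change times $C_i$ with the spectral estimator's guarantee, which the lemma states only \emph{conditionally} on $C_i$. Because the membership Markov chain is independent of the edge-formation Bernoullis given the memberships, I can condition on the entire membership trajectory, apply the lemma node by node, and only afterwards take expectation over the trajectory; doing this in the right order is what keeps the argument from becoming circular. A secondary subtlety is absorbing the residual $o(1)$: I must argue that in the regime $\alpha=\mathcal{O}(\tfrac{\log n}{n})$ the per-node error for a correctly recoverable node ($C_i\le T/2$) is at most of order $\tfrac{\log n}{n}$, so that averaging it over the nodes leaves a term dominated by $\tfrac{\log n}{n}T$. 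Finally, I would remark on why the threshold is $T/2$: the accumulated matrix $A=\sum_t A_t$ weights all historical edges equally, so a node is correctly classified only once the edges formed in its new cluster outnumber those formed before its last change, i.e. only once at least half the horizon has elapsed since $C_i$ --- this is precisely the inefficiency that the decay rate introduced in the next section is designed to remove.
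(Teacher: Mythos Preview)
Your proposal is correct and follows essentially the same route as the paper: invoke the genie-aided lemma to reduce per-node misclassification to the event $C_i > T/2$, then bound the fraction of such nodes by a union bound over the $T/2$ most recent steps with per-step change probability $\mathcal{O}(\tfrac{\log n}{n})$. The only cosmetic difference is that the paper first partitions nodes into ``never changed'' versus ``changed at least once'' and then subdivides the latter by whether $\max C_i \le t/2$, whereas you go directly to the $C_i \le T/2$ versus $C_i > T/2$ split; your version is slightly more streamlined, and you are also more explicit than the paper about the residual $o(1)$ term and the conditioning order, both of which the paper glosses over.
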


\begin{proof}

Let $X^c_t$ denote the set of changed nodes from timestep $1$ to timestep $t$, and $X^u_t$ denote the set of the remaining unchanged nodes. $\Theta^c_t$ means the membership matrix of $X^c_t$ and $\Theta^u_t$ is the membership matrix of $X^u_t$ . We have
\begin{equation*}
    \begin{aligned}
    &\mathbb{E}(A(\hat{\Theta}_t,\Theta_t))\\=&\mathbb{E}(A(\hat{\Theta}^u_{t}\cup \hat{\Theta}^c_{t},\Theta^u_{t}\cup \Theta^c_{t})=1)\\
    \leq&\sum_{i\in X^u_{t}} \mathbb{E}(\hat{\Theta}_{t,i}=\Theta_{t,i})+\sum_{j\in X^c_{t}} \mathbb{E}(\hat{\Theta}_{t,j}=\Theta_{t,j})\\
    =&\sum_{i\in X^u_{t}} \mathbb{E}(\hat{\Theta}_{t,i}=\Theta_{t,i})+\sum_{s=1}^{t}\sum_{j\in X^c_{s}/X^c_{s-1}} \mathbb{E}(\hat{\Theta}_{t,j}=\Theta_{t,j})\\
    =&(1-\mathcal{O}(\frac{\log n}{n}t))\mathbb{P}(\hat{\Theta}_{t,i}=\Theta_{t,i})_{i\in X^u_{t}}\\
    &+\mathcal{O}(\frac{\log n}{2n}t)\mathbb{P}(\hat{\Theta}_{t,j},\Theta_{t,j})_{j\in X^c_{t}, \frac{t}{2}\geq \max C_j}\\
    &+\mathcal{O}(\frac{\log n}{2n}t)\mathbb{P}(\hat{\Theta}_{t,j},\Theta_{t,j})_{j\in X^c_{t}, \frac{t}{2}< \max C_j}\\
    =&(1-\mathcal{O}(\frac{\log n}{n}t))(1-o(1))\\
    &+\mathcal{O}(\frac{\log n}{2n}t)(1-o(1))+\mathcal{O}(\frac{\log n}{2n}t)o(1)\\
    =& 1-\mathcal{O}(\frac{\log n}{n}t)
    \end{aligned}
\end{equation*}
Then the expectation of relative error at timestep $T$ is 
\begin{equation}
    \mathbb{E}(E(\hat{\Theta}_T,\Theta_T))= \mathcal{O}(\frac{\log n}{n} T).
\end{equation}
When $T=\mathcal{O}(\frac{n}{\log n})$, we have $\mathbb{E}(A(\hat{\Theta}_T,\Theta_T))=1-\mathcal{O}(1)$. By Lemma \ref{lemma:partial}(Technical Appendix), it only solves partial recovery.
\end{proof}

\section{Proof of Proposition 2}

\begin{prop}[Optimal Decay Rate]\label{prop:opt_appendix}
The concentration of each block $k$ is upper-bounded by
\begin{equation}
\label{equ:bound}
    \left\|\hat{A}_{t}^{k}-P_{t}^{k}\right\|\lesssim E_1(\beta_{k})+E_2(\beta_{k}),
\end{equation}
where $\beta_k$ denotes the max decay rate of class $k$ and
\begin{equation} 
E_1(\beta_{k}) = \sqrt{ n \alpha \beta_{k}},\;
E_2(\beta_{k}) = \alpha \sqrt{\frac{n^2 \varepsilon_k}{\beta_{k}}},
\end{equation}
which is minimized when $\beta_{k}=\sqrt{ n \alpha \varepsilon_k}$.
\end{prop}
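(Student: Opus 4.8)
The plan is to bound the concentration through a bias--variance decomposition after rewriting the smoothed block as a geometrically weighted sum of the per-step adjacency matrices. First I would restrict attention to the nodes lying in cluster $k$ at time $t$ and bound all the decay weights appearing in the recursion $\hat{A}_t = (1-\Theta_t\Lambda\Theta_t^T)\odot\hat{A}_{t-1}+\Theta_t\Lambda\Theta_t^T\odot A_t$ by the single largest decay rate $\beta_k$ of class $k$, obtaining $\hat{A}_t^k = \sum_{s\le t} w_s A_s^k$ with $w_s \asymp \beta_k(1-\beta_k)^{t-s}$ and $\sum_s w_s = 1$. Conditioning on the whole membership trajectory $\{\Theta_s\}_{s\le t}$ makes the matrices $A_s^k$ independent weighted Bernoulli matrices, so by the triangle inequality $\|\hat{A}_t^k - P_t^k\| \le \|\hat{A}_t^k - \mathbb{E}[\hat{A}_t^k\mid\{\Theta_s\}]\| + \|\mathbb{E}[\hat{A}_t^k\mid\{\Theta_s\}] - P_t^k\|$, where the first (variance) term will produce $E_1$ and the second (bias) term will produce $E_2$.

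For the variance term I would apply a spectral-norm concentration bound for sums of independent random matrices (matrix Bernstein, or the sparse-regime bound of Lei and Rinaldo). Each entry $(\hat{A}_t^k)_{ij} = \sum_s w_s (A_s)_{ij}$ has variance $\sum_s w_s^2\,\mathrm{Var}((A_s)_{ij}) \asymp \alpha\sum_s w_s^2 \asymp \alpha\beta_k$, since the geometric weights satisfy $\sum_s w_s^2 \asymp \beta_k$ and each weight is at most $\beta_k$. A symmetric $n\times n$ matrix with independent, bounded, mean-zero entries of variance $\sigma^2\asymp\alpha\beta_k$ has spectral norm $\lesssim \sqrt{n\sigma^2} = \sqrt{n\alpha\beta_k}$, which is exactly $E_1(\beta_k)$; this is the step where the assumption $\alpha = \Omega(\log n/n)$ is needed to keep the sparse random matrix concentrated.

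For the bias term I would compute $\mathbb{E}[\hat{A}_t^k\mid\{\Theta_s\}] - P_t^k = -\alpha(1-\tau)G$, where $G_{ij} = \sum_s w_s\,\mathbf{1}[\sigma_i(s)\ne\sigma_j(s)] \in [0,1]$ records the weighted fraction of past time at which the currently co-clustered nodes $i,j$ were actually in different clusters. Bounding the spectral norm by the Frobenius norm and using $G_{ij}\in[0,1]$ gives $\|G\| \le \|G\|_F = \sqrt{\sum_{ij}G_{ij}^2} \le \sqrt{\sum_{ij}G_{ij}}$. Since a node in cluster $k$ is separated from a fixed partner at a past step $s$ with probability $\asymp (t-s)\varepsilon_k$, one has $\mathbb{E}[G_{ij}] \asymp \varepsilon_k\sum_s w_s(t-s) \asymp \varepsilon_k/\beta_k$, hence $\mathbb{E}\sum_{ij}G_{ij} \asymp n^2\varepsilon_k/\beta_k$; a Chernoff/Azuma argument over the independent node trajectories concentrates this sum about its mean. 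This yields $\|G\| \lesssim n\sqrt{\varepsilon_k/\beta_k}$ and therefore a bias $\lesssim \alpha n\sqrt{\varepsilon_k/\beta_k} = E_2(\beta_k)$.

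Adding the two bounds gives the claimed inequality, and minimizing $E_1(\beta_k)+E_2(\beta_k) = \sqrt{n\alpha\beta_k} + \alpha\sqrt{n^2\varepsilon_k/\beta_k}$ by setting the derivative to zero gives $\sqrt{n\alpha}\,\beta_k^{-1/2} = \alpha n\sqrt{\varepsilon_k}\,\beta_k^{-3/2}$, i.e. $\beta_k = \sqrt{n\alpha\varepsilon_k}$, as stated. I expect the main obstacle to be the variance term: controlling the spectral norm of a weighted sum of very sparse Bernoulli matrices in the regime $\alpha\asymp\log n/n$ is delicate, since naive matrix Bernstein loses logarithmic factors and high-degree vertices must be handled by truncation as in the Lei--Rinaldo analysis. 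A secondary technical point is justifying the reduction of the membership-dependent decay operator $\Theta_s\Lambda\Theta_s^T$ to the single geometric rate $\beta_k$ and propagating the trajectory randomness cleanly through both the bias and variance estimates.
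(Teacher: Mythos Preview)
Your proposal is correct and follows essentially the same route as the paper: the same bias--variance split $\|\hat A_t^k-P_t^k\|\le\|\hat A_t^k-\hat P_t^k\|+\|\hat P_t^k-P_t^k\|$ with $\hat P_t^k=\mathbb{E}[\hat A_t^k\mid\{\Theta_s\}]=\sum_s w_s P_{t-s}^k$, the variance term handled by a Lei--Rinaldo/Keriven--Vaiter concentration bound for weighted Bernoulli matrices via $\sum_s w_s^2\asymp\beta_k$, and the bias term bounded through the Frobenius norm together with the expected number of label changes. The only cosmetic difference is that the paper writes the bias as $\sum_s\beta_s^k\|P_{t-s}^k-P_t^k\|_F$ (bounding $\|P_{t-s}^k-P_t^k\|_F^2\lesssim\alpha^2 n|S_s|$ and then using $\sum_s\beta_s^k\sqrt{s\varepsilon_k}\lesssim\sqrt{\varepsilon_k/\beta_k}$), whereas you package the same quantity as $\alpha(1-\tau)\|G\|_F$ with $\mathbb{E}[G_{ij}]\asymp\varepsilon_k\sum_s w_s(t-s)\asymp\varepsilon_k/\beta_k$; these are equivalent computations.
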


\subsection{Decay rule}
In dynamic SBM, we use the following decay rule:
\begin{equation}
    \hat{A}_t=(1-\Theta_t \Lambda (\Theta_t)^T)\hat{A}_{t-1}+\Theta_t \Lambda (\Theta_t)^T A_t.
\end{equation}
Let $\hat{A}_{t}^{k}$ denote the block matrix corresponding to cluster $k$, and similarly consider $K$ blocks $P_t^k$ of the connection probability matrix $P_t$. Let $\lambda_k$ denote the $k$-th element in the diagonal of $\Lambda$. We have 
\begin{equation}
    \hat{A}_t^k=(1-\lambda_k)\hat{A}_{t-1}^k+\lambda_k A_t^k,
\end{equation}
which can also be written as
\begin{equation}
    \hat{A}_t^k=\sum_{s=0}^{t}\beta_s^k {A}_{t-s}^k,
\end{equation}
where $\beta_s^k=\lambda_k (1-\lambda_k)^s$ for $s<t$ and $\beta_{t}^k=(1-\lambda)^t$. Then we denote the maximum of $\beta_s^k$ as $\beta_k$ and we have $\beta_k=\lambda_k$.
Similarly, we define
\begin{equation}
    \hat{P}_t^k=\sum_{s=0}^{t}\beta_s^k {P}_{t-s}^k.
\end{equation}
\subsection{Error Bound}
The error bound $\left\|\hat{A}_{t}^{k}-P_{t}^{k}\right\|$ can be divided into two terms:
\begin{equation}
    \left\|\hat{A}_{t}^{k}-P_{t}^{k}\right\|\leq \left\|\hat{A}_{t}^{k}-\hat{P}_{t}^{k}\right\|+ \left\|\hat{P}_{t}^{k}-P_{t}^{k}\right\|
\end{equation}
\subsubsection{Preliminaries}
The result is valid for any estimator with weights $\beta_s^k\geq 0$ that satisfy the property that there are constants $C_\beta,C'_\beta>0$ such that:
\begin{equation}
\begin{aligned}
&\sum_{s=0}^t \beta_s^k =1,\;\; \beta_s^k \leq \beta_{k},\;\;\sum_{s=0}^t (\beta_s^k)^2\leq C_\beta \beta_{k} \\
&\sum_{s=0}^t \beta_s^k \min(1,\sqrt{s \varepsilon_k})\leq C'_\beta \sqrt{\frac{\varepsilon_k}{\beta_k}}
\end{aligned}
\end{equation}
Our defined decay rate naturally satisfy the first three preliminaries. The last preliminary is satisfied when $t\geq \frac{\min (\log(\varepsilon_k/\beta_k),\log \beta_k)}{2 \log (1-\beta_k)}$

\subsubsection{Bound the first term}
\begin{theorem}
\cite{keriven2020sparse} Let $A_1,...,A_t\in\{0,1\}^{n}$ be $t$ symmetric Bernoulli matrices whose elements $a^{s}_{ij}$ are independent random variables:
\begin{equation}
    a^{s}_{ij} \backsim Ber(p^{s}_{ij}), a^{s}_{ji}=a^{s}_{ij}, a^{s}_{ii}=0
\end{equation}
Assume $p^{s}_{ij}\leq \alpha$. Consider non-negative weights $\beta_s$ and $A=\sum_{s=0}^t \beta_s A_{t-s}$ and $P=\mathbb{E}(A)$, there is a universal constant $C$ such that for all $c>0$, we have 
\begin{equation}
\begin{aligned}
&\mathbb{P}\left(\left\|A-P\right\|\geq C(1+c)\sqrt{n \alpha \beta_{\max}}\right)\\
&\leq e^{-(\frac{c^2/2}{2C_\beta + 2c/3}-\log14)n}
    +e^{-\frac{c^2/2}{2C_\beta + 2c/3}\frac{n \alpha}{\beta_{\max}}+\log n}+n^{-\frac{c}{4}+6}.
\end{aligned}
\end{equation}
\label{theorem:first_term}
\end{theorem}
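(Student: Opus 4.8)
The plan is to follow the standard route for spectral-norm concentration of sparse random matrices due to \citet{lei2015consistency}, adapted here to a \emph{weighted} sum of independent Bernoulli matrices. First I would reduce the operator norm to a bilinear form over a finite net: fix a $\frac14$-net $\mathcal{N}$ of the unit sphere $S^{n-1}$, of cardinality at most $9^n$, so that $\|A-P\|\lesssim \max_{x,y\in\mathcal{N}} x^\top(A-P)y$. Everything then reduces to controlling $x^\top(A-P)y$ uniformly over $\mathcal{N}\times\mathcal{N}$, paying an $e^{O(n)}$ union-bound cost; this cost is exactly the source of the factor $n$ (and the $\log 14$) in the first exponential term.

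Second, I would record the one- and two-moment behaviour of the centered entries. Writing $(A-P)_{ij}=\sum_{s=0}^t\beta_s\,(a^{t-s}_{ij}-p^{t-s}_{ij})$ as a sum of independent mean-zero terms, each summand is bounded in absolute value by $\beta_{\max}$, while the entry variance satisfies $\mathrm{Var}((A-P)_{ij})=\sum_s\beta_s^2\,p^{t-s}_{ij}(1-p^{t-s}_{ij})\le \alpha\sum_s\beta_s^2\le C_\beta\,\alpha\,\beta_{\max}$, where I invoke the preliminary bounds $\beta_s\le\beta_{\max}$ and $\sum_s\beta_s^2\le C_\beta\beta_{\max}$. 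Uniform boundedness by $\beta_{\max}$ together with a variance proxy of order $\alpha\beta_{\max}$ is precisely what makes the target scale $\sqrt{n\alpha\beta_{\max}}$ emerge.

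The core step is the light/heavy split of \citet{lei2015consistency}: I would write $x^\top(A-P)y=\sum_{(i,j)\in\mathcal{L}}+\sum_{(i,j)\in\mathcal{H}}$, where the light set $\mathcal{L}$ collects pairs with $|x_iy_j|$ below a threshold of order $\sqrt{\beta_{\max}/(n\alpha)}$ and the heavy set $\mathcal{H}$ the remainder. On $\mathcal{L}$ the summands are bounded by the threshold and the total variance is at most $C_\beta\alpha\beta_{\max}\|x\|^2\|y\|^2=C_\beta\alpha\beta_{\max}$, so Bernstein's inequality gives a deviation $\lesssim(1+c)\sqrt{n\alpha\beta_{\max}}$ with the Bernstein exponent $\tfrac{c^2/2}{2C_\beta+2c/3}$; combined with the net union bound this yields the first term. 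For the heavy set I would first establish a high-probability bound on the weighted node degrees (row sums of $|A-P|$): each weighted degree has variance of order $n\alpha\beta_{\max}$ and range $\beta_{\max}$, so Bernstein together with a union bound over the $n$ nodes produces the second term, with its $n\alpha/\beta_{\max}$ scaling and additive $\log n$.

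On the resulting bounded-degree event I would finish the heavy contribution by a combinatorial/discrepancy argument: discretizing the coordinates and bounding the weighted edge mass $\sum_{i\in I,\,j\in J}(A-P)_{ij}$ between every pair of vertex subsets by $\beta_{\max}$ times a controlled edge count gives the polynomial term $n^{-c/4+6}$. Assembling the three failure probabilities — the $e^{O(n)}$ net cost, the degree-concentration tail at ratio $n\alpha/\beta_{\max}$, and the combinatorial polynomial tail — yields the stated inequality. The hard part is exactly this heavy-pair analysis in the sparse regime $\alpha=\mathcal{O}(\log n/n)$: there node degrees are not uniformly concentrated and naive Bernstein fails, so one must run the bounded-degree/discrepancy machinery of \citet{lei2015consistency} carefully, carrying the weights $\beta_s$ through every estimate and leaning on the preliminary conditions to keep the weighted sums comparable to the unweighted adjacency matrix up to the constants $C_\beta,C'_\beta$.
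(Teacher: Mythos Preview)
The paper does not prove this theorem at all: it is quoted verbatim as a result of \citet{keriven2020sparse} and then simply \emph{applied} to bound $\|\hat{A}_t^k-\hat{P}_t^k\|$. So there is no ``paper's own proof'' to compare against here.

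That said, your outline is exactly the route taken in \citet{keriven2020sparse}, which itself adapts the light/heavy decomposition of \citet{lei2015consistency} to weighted sums of independent Bernoulli matrices. The key structural observations you make --- that $(A-P)_{ij}$ has range $\beta_{\max}$ and variance at most $C_\beta\,\alpha\,\beta_{\max}$ thanks to the preliminary $\sum_s\beta_s^2\le C_\beta\beta_{\max}$, and that these two quantities slot directly into the Bernstein/discrepancy machinery --- are precisely what makes the weighted extension go through. The three failure-probability terms you identify correspond to the net union bound, the weighted-degree concentration, and the combinatorial heavy-pair estimate, matching the three terms in the stated bound. Your sketch is correct and faithful to the original argument.
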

By applying Theorem \ref{theorem:first_term}, for fixed block $\Theta_0^k,...,\Theta_t^k$, if ${\frac{n \alpha}{\beta_k}}\gtrsim \log n$, then for any $\nu>0$, there is a constant $C_\nu$ such that with probability at least $1-n^\nu$
\begin{equation}
\begin{aligned}
\left\|\hat{A}_{t}^{k}-\hat{P}_{t}^{k}\right\|
    &\leq \left\|\hat{A}_{t}^{k}-\mathbb{E}(\hat{A}_{t}^{k})\right\|+\left\|diag(\hat{P}_t^k) \right\|\\
    &\leq C_\nu \sqrt{n\alpha \beta_k} + \alpha
\end{aligned}
\end{equation}

In all considered case, $\beta_k \gg \frac{1}{n}$, $\alpha$ is negligible, we have
\begin{equation}
    \left\| \hat{A}_t^k - \hat{P}_t^k\right\|\lesssim \sqrt{n \alpha \beta_k}
\end{equation}

\subsubsection{Bound the second term}
Since $\sum_{s=0}^t \beta_s^k=1$, we have
\begin{equation}
    \left\| \hat{P}_t^k - P_t^k\right\|\leq \sum_{s=0}^t \beta_s^k \left\|P_{t-s}^k-P_t^k \right\|\leq \sum_{s=0}^t \beta_s^k \left\|P_{t-s}^k-P_t^k \right\|_F,
\end{equation}
where $\|.\|_F$ is the Frobenius norm. 

\begin{lemma}
\label{lemma:bound_p}
Consider $P=\Theta B \Theta^T$ and $P'=\Theta' B (\Theta')^T$. Let $S$ denotes the set of nodes that have changed clusters between $\Theta$ and $\Theta'$. Then we have 
\begin{equation}
\begin{aligned}
\left\| P-P'\right\|^2_F &=\sum_{i\in S}\sum_j(p_{ij}-p'_{ij})+(p_{ji}-p'_{ji}) \\
&\leq 4 \sum_{i\in S} \sum_j p_{ij}^2 + (p'_{ij})^2 \\
& \leq 8 \alpha^2 |S| n \max_k \sum_l (B)^2_{kl} \\
&\leq 8 \alpha^2 |S| n
\end{aligned}
\end{equation}
\end{lemma}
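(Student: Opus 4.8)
The plan is to expand the squared Frobenius norm entrywise and exploit the fact that, for the stochastic block model, $p_{ij}=(\Theta B \Theta^T)_{ij}=B_{c(i),c(j)}$ depends only on the cluster labels $c(i),c(j)$ of its two endpoints under $\Theta$ (and analogously $p'_{ij}=B_{c'(i),c'(j)}$ under $\Theta'$). First I would write $\|P-P'\|_F^2=\sum_{i,j}(p_{ij}-p'_{ij})^2$ and observe that whenever both $i\notin S$ and $j\notin S$ we have $c(i)=c'(i)$ and $c(j)=c'(j)$, so that $p_{ij}=p'_{ij}$ and the term vanishes. Hence the entire sum is supported only on index pairs having at least one endpoint in the set $S$ of relabeled nodes.

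Next I would collect the surviving terms by the endpoint lying in $S$. Since $B$ is symmetric, both $P$ and $P'$ are symmetric, so a pair $(i,j)$ with $i\in S$ contributes the same amount as its transpose $(j,i)$. Summing over all $i\in S$ and all $j$ the two contributions $(p_{ij}-p'_{ij})^2+(p_{ji}-p'_{ji})^2$ therefore covers every nonzero entry of $P-P'$, where the pairs with both endpoints in $S$ are merely double counted; this yields the first displayed line as an upper bound (the factor $2$ is absorbed into the constants below). Applying the elementary inequality $(a-b)^2\le 2a^2+2b^2$ to each term then gives the second line, with its factor of $4$ arising from the symmetric double counting combined with this split.

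Finally I would estimate each entry crudely. Every entry of $B$ satisfies $B_{kl}\le\alpha$, since the diagonal equals $\alpha$ and the off-diagonal equals $\tau\alpha$ with $\tau\in(0,1)$; consequently $p_{ij},p'_{ij}\le\alpha$. For each fixed $i\in S$ the inner sum over the $n$ columns contributes at most $n\alpha^2$, and more sharply one may write $\sum_j p_{ij}^2=\sum_l n_l B_{c(i),l}^2\le n\max_k\sum_l B_{kl}^2$ after bounding each cluster size $n_l\le n$. Summing over the $|S|$ choices of $i$ and over the two symmetric terms then produces the last two lines and the final bound $8\alpha^2\,|S|\,n$.

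The routine part is the algebra; the only point requiring care is the bookkeeping in the first step, namely correctly identifying that $P-P'$ is supported exactly on the rows and columns indexed by $S$, and then handling the symmetric double counting so that the constant $2$ (and hence the eventual $8$) is honestly an upper bound rather than an equality. Everything after that is a direct entrywise estimate using $B_{kl}\le\alpha$.
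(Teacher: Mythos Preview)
Your proposal is correct and follows exactly the route the paper takes: the paper presents the lemma as a self-contained chain of inequalities with no separate proof, and your argument supplies precisely the missing justifications for each line---identifying that $P-P'$ is supported on rows/columns indexed by $S$, handling the symmetric double counting (you rightly note the first line is really an upper bound, not an equality), applying $(a-b)^2\le 2a^2+2b^2$, and then bounding entries by~$\alpha$. There is nothing substantively different between your write-up and the paper's intended derivation.
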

By using Lemma \ref{lemma:bound_p} and the Lemma \cite{keriven2020sparse}
\begin{equation}
\begin{aligned}
\mathbb{P}( \exists k,\; \left\|P_{t-k}-P_t\right\|^2_F  \geq  (8+&C) \alpha n^2 \min (1,k\varepsilon) )\\
&\leq e^{-2C^2\varepsilon^2n+\log \frac{1}{\varepsilon}}
\end{aligned}
\end{equation}
we have, with probability at least $1-n^{\nu}$,
\begin{equation}
    \left\|P_{t-s}^k-P_t^k \right\|_F\leq 2 \alpha n^2 \min (1,k\varepsilon_k))
\end{equation}
By the preliminary of $\sum_{s=0}^t \beta_s^k \min(1,\sqrt{s \varepsilon_k})$, we obtain the desired bound.
\begin{equation}
    \left\| \hat{P}_t^k - P_t^k\right\|\lesssim \alpha \sqrt{\frac{n^2 \varepsilon_k}{\beta_{k}}}
\end{equation}

\section{Proof of Proposition 3}

\begin{prop}[Almost Exact Recovery]\label{prop:bound_appendix}
 Let $\lambda_{\max}$ denote the maximum element on the diagonal of $\Lambda$. With probability at least $1-n^{-\nu}$ for any $\nu > 0$, at any time $t$ we have 
\begin{equation}
    \left\|\hat{A}_t-P_t\right\|\lesssim \sqrt{ n \alpha \lambda_{\max}}
\end{equation}
When $K$ is constant, $\varepsilon_k=\mathcal{O}(\frac{\log n}{n})$ and $\alpha=\mathcal{O}(\frac{\log n}{n})$, the relative error is $\mathcal{O}\left(\frac{1}{n^{\frac{1}{4}}\log n}\right)$, which implies almost exact recovery at time $T$. 
\end{prop}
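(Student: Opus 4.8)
The plan is to convert the recovery guarantee into a single concentration statement and then feed it into the error bound of Eq.~\eqref{eq:errorbound}. Since that inequality controls the relative error $E(\hat{\Theta}_t,\Theta_t)$ entirely through the spectral deviation $\|\hat{A}_t-P_t\|$, the proposition reduces to two tasks: (i) establishing the stated concentration $\|\hat{A}_t-P_t\|\lesssim\sqrt{n\alpha\lambda_{\max}}$ with probability at least $1-n^{-\nu}$ uniformly in $t$, and (ii) substituting the sparse-regime parameters into Eq.~\eqref{eq:errorbound} to read off the rate and invoking the almost-exact-recovery criterion.

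For step (i) I would partition $\hat{A}_t-P_t$ according to the current membership $\Theta_t$ into the $K$ diagonal (intra-cluster) blocks and the off-diagonal (inter-cluster) blocks, and bound the full spectral norm by the sum of block norms via the triangle inequality. Each diagonal block $k$ is exactly the object of Proposition~\ref{prop:opt_appendix}: with the decay set to $\lambda_k=\min(1,\sqrt{n\alpha\varepsilon_k})$ the terms $E_1(\beta_k)$ and $E_2(\beta_k)$ are balanced, so $\|\hat{A}_t^k-P_t^k\|\lesssim\sqrt{n\alpha\lambda_k}\le\sqrt{n\alpha\lambda_{\max}}$. The off-diagonal blocks carry decay rate $1$, so $\hat{A}_t^{(jk)}=A_t^{(jk)}$ retains only current edges; this annihilates the bias term $\|\hat{P}_t^{(jk)}-P_t^{(jk)}\|$ and leaves a pure concentration term controlled by the same Bernoulli bound (Theorem~\ref{theorem:first_term}). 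Because $K=\mathcal{O}(1)$, summing the $\mathcal{O}(K^2)$ blocks only affects constants, and a union bound over the blocks and over the at most $T$ time steps converts each per-block failure probability $n^{-\nu'}$ into an aggregate $n^{-\nu}$, yielding the uniform-in-$t$ guarantee.

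For step (ii), when $\varepsilon_k=\mathcal{O}(\log n/n)$ and $\alpha=\mathcal{O}(\log n/n)$ the optimal decay is $\lambda_{\max}=\sqrt{n\alpha\varepsilon_k}=\mathcal{O}(\log n/\sqrt{n})<1$, so $\|\hat{A}_t-P_t\|^2\lesssim n\alpha\lambda_{\max}=\mathcal{O}((\log n)^2/\sqrt{n})$. With $K,\tau,\delta$ constant and balanced cluster sizes $n_{\max}',n_{\min}=\Theta(n)$, the prefactor in Eq.~\eqref{eq:errorbound} is $\Theta(1/(\log n)^2)$, and multiplying through gives the claimed relative error $\mathcal{O}(n^{-1/4}/\log n)$, a valid upper bound. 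Since this error is $o(1)$ on the high-probability event and is always at most $1$, the expected agreement obeys $\mathbb{E}[A(\hat{\Theta}_T,\Theta_T)]=1-o(1)$, so Lemma~\ref{lemma:almost} yields almost exact recovery at time $T$.

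The hard part will be step (i)'s assembly of the per-block bounds into a single full-matrix bound that holds uniformly over time. The partition is defined by the time-varying membership $\Theta_t$, so historical edges stored in $\hat{A}_t$ may join nodes that have since migrated into different current blocks; keeping the bias term $\|\hat{P}_t^k-P_t^k\|$ under control therefore depends delicately on the change probabilities through the last of Proposition~\ref{prop:opt_appendix}'s preliminary conditions, which only holds once $t$ is sufficiently large relative to $\log\beta_k$. Verifying that this condition holds uniformly for all $t\le T$ while simultaneously running the union bound is the step that requires the most care.
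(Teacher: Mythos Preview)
Your proposal is correct and follows essentially the same route as the paper: block-partition $\hat{A}_t-P_t$ by the current membership, invoke Proposition~\ref{prop:opt_appendix} on each diagonal block with the optimal $\lambda_k$, dispose of the off-diagonal blocks separately, combine using $K=\mathcal{O}(1)$, then plug the resulting $\|\hat{A}_t-P_t\|^2\lesssim n\alpha\lambda_{\max}$ into Eq.~\eqref{eq:errorbound} and appeal to Lemma~\ref{lemma:almost}. The only notable differences are that the paper uses the block-diagonal identity $\|\hat{A}_t^d-P_t^d\|=\max_k\|\hat{A}_t^k-P_t^k\|$ rather than your triangle-inequality sum (equivalent up to constants here), and it dismisses the off-diagonal contribution by invoking $\tau\ll 1$ rather than your cleaner appeal to Theorem~\ref{theorem:first_term} with $\beta=1$; your treatment of the off-diagonal part and the explicit union bound over $t$ are in fact more careful than what the paper writes down.
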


\begin{proof}
By Equation \ref{equ:bound}(Technical Appendix), we have
\begin{equation}
    \left\|\hat{A}_{t}^{k}-P_{t}^{k}\right\| \lesssim \sqrt{ n \alpha \lambda_{k}}.
\end{equation}

We define the decay rates as
\begin{equation}
    \Lambda_{jk}=\left\{
\begin{aligned}
\min(1,\sqrt{n \alpha  \varepsilon_k})& , &j=k\\
1& , &j\neq k.
\end{aligned}
\right.
\end{equation}

We can separate $\hat{A}_t$ into $K^2$ blocks based on their belonging of clusters. Let $\hat{A}_t^d$ denote the matrix include the diagonal block of $\hat{A}_t$ and $\hat{A}_t^n$ denote the matrix include the non-diagonal block of $\hat{A}_t$. The same goes in $\hat{A}_t^d$ and $\hat{P}_t^n$. We have

\begin{equation}
    \left\|\hat{A}_t-P_t\right\|\leq \left\|\hat{A}_t^d-P_t^d\right\| + \left\|\hat{A}_t^n-P_t^n\right\|
\end{equation}

For $\left\|\hat{A}_t^n-P_t^n\right\|$, since $\tau \ll 1$, by setting $\Lambda_{jk}=1, (j\neq k)$, spectral norms of the matrix is negligible. Then we have
\begin{equation}
    \left\|\hat{A}_t-P_t\right\|\lesssim \left\|\hat{A}_t^d-P_t^d\right\| 
\end{equation}

Since $K$ is constant, by the property of spectral norm, we have
\begin{equation}
    \left\|\hat{A}_t-P_t\right\|\lesssim \max_k{\left\|\hat{A}_{t}^{k}-P_{t}^{k}\right\|} =\sqrt{ n \alpha \lambda_{\max}}
\end{equation}

When $\varepsilon_k=\mathcal{O}(\frac{\log n}{n})$ and $\alpha=\mathcal{O}(\frac{\log n}{n})$,
\begin{equation}
\begin{aligned}
&E(\hat{\Theta},\Theta)\\
\lesssim &(1+\delta)\frac{n_{\max}' K}{n\alpha^2 n_{\min}^2 \tau^2}\|\hat{A}-P\|^2\\
\lesssim & (1+\delta)\frac{n K}{n\alpha^2 n^2 \tau^2} \sqrt{ n \alpha \lambda_{\max}}\\
= & (1+\delta)\frac{K}{\mathcal{O}(\frac{\log n}{n})^2 n^2 \tau^2} \sqrt{ n \mathcal{O}(\frac{\log n}{n}) \sqrt{n \mathcal{O}(\frac{\log n}{n}) \mathcal{O}(\frac{\log n}{n})}}\\
=&(1+\delta)\frac{K}{\mathcal{O}({\log n})^2 \tau^2} \sqrt{ \mathcal{O}({\log n})^2 \sqrt{ \mathcal{O}(\frac{1}{n}) }}\\
=&(1+\delta)\frac{K}{ \tau^2} \mathcal{O}\left(\frac{1}{n^{\frac{1}{4}} \log n}\right).
\end{aligned}
\end{equation}
The relative error is $\mathcal{O}\left(\frac{1}{n^{\frac{1}{4}}\log n}\right)$ By Lemma \ref{lemma:almost}(Technical Appendix), it implies almost exact recovery at time $T$.
\end{proof}

\section{GCN and Spectral Clustering}
Figure \ref{fig:spec_norm_appendix} shows that Spectral Clustering and GCN have qualitatively similar accuracy on simulated data as we vary the decay rate $\lambda$ (the same $\lambda$ is used for all nodes). As expected from Proposition~\ref{prop:opt_appendix}, the optimal decay rate $\lambda$ increases as we increase the link probability $\alpha$, as does the value of $\lambda$ that minimizes the spectral norm $\|\hat{A} - P\|$. The optimal decay rate for GCN matches the decay rate value that minimizes the spectral norm, while the optimal decay rate for spectral clustering is larger than the one minimizing the spectral norm.
\begin{figure}[ht]
    \centering
    \includegraphics[width=0.23\textwidth,trim={0.3cm 0 0.8cm 0.3cm}, clip]{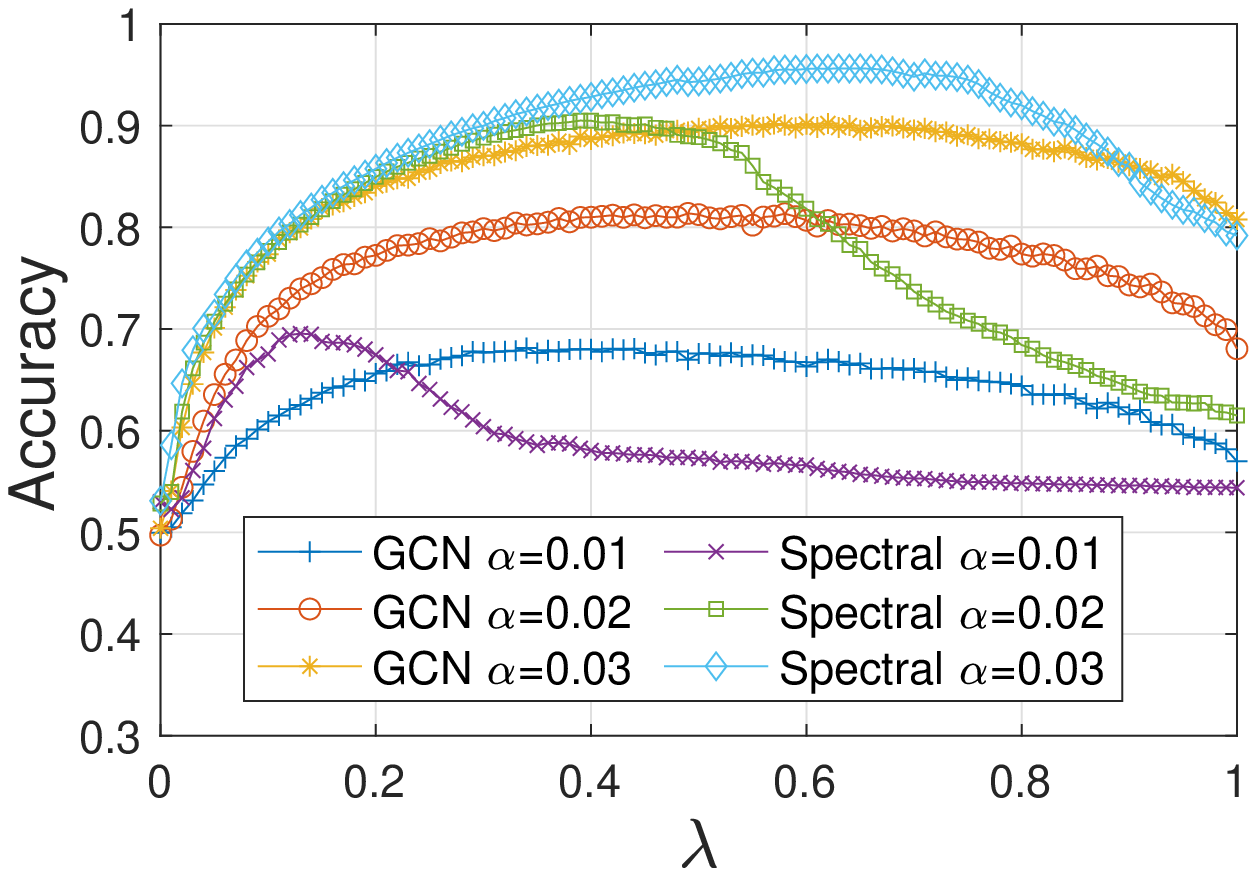}
    \includegraphics[width=0.23\textwidth,trim={0.3cm 0 0.8cm 0.3cm}, clip]{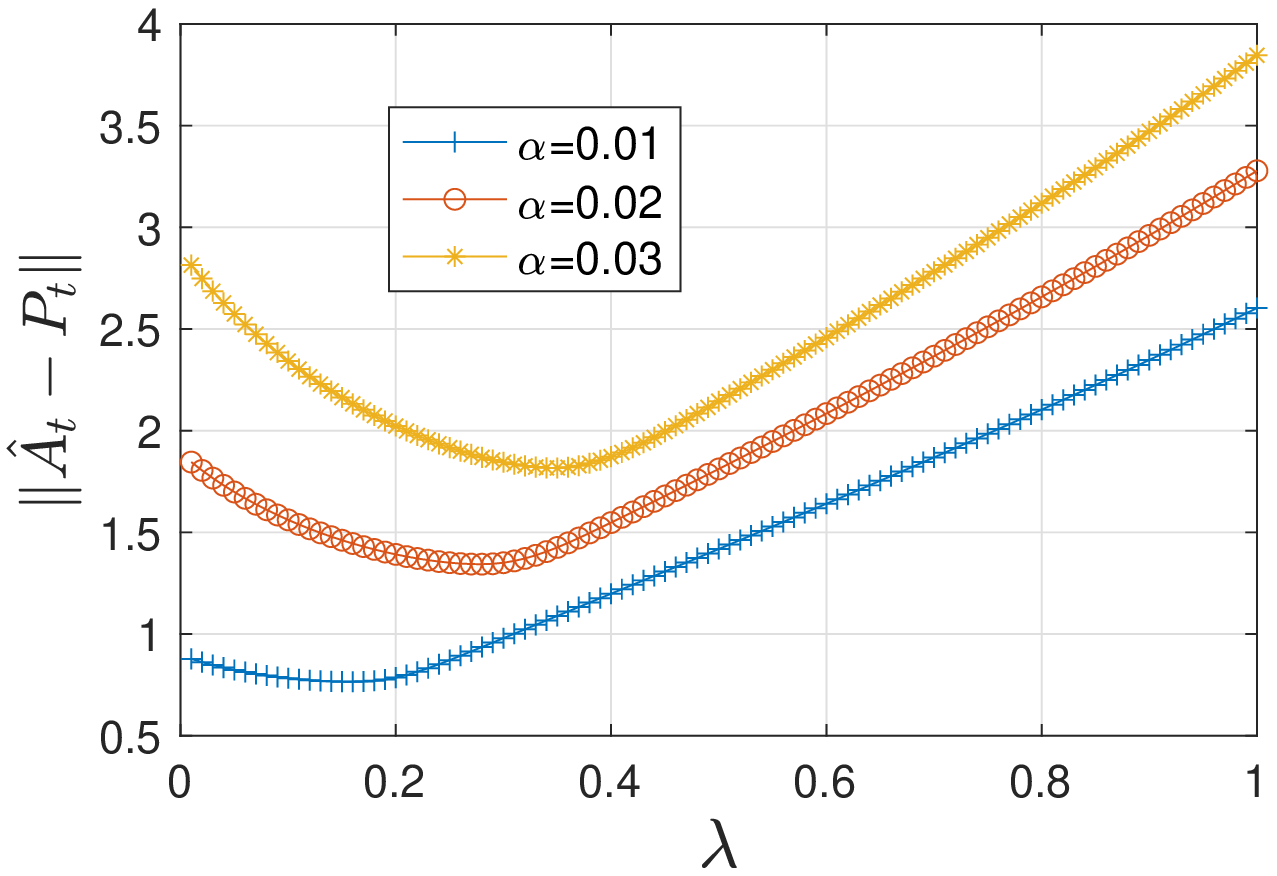}
    \caption{Accuracy and Spectral Norm as we vary $\alpha$. The optimal decay rate $\lambda$ increases with $\alpha$, as in Proposition~\ref{prop:opt_appendix}.}
    \label{fig:spec_norm_appendix}
\end{figure}
\section{Experiment result on simulated data}
Figure \ref{fig:bar_simu} shows the accuracy, AUC, and F1 score comparison of all baseline methods on simulated data, averaged over all 50 timesteps. Our TRNNGCN and RNNGCN methods show the best performance across all three metrics.

\begin{figure}[ht]
    \centering
    \includegraphics[width=0.47\textwidth]{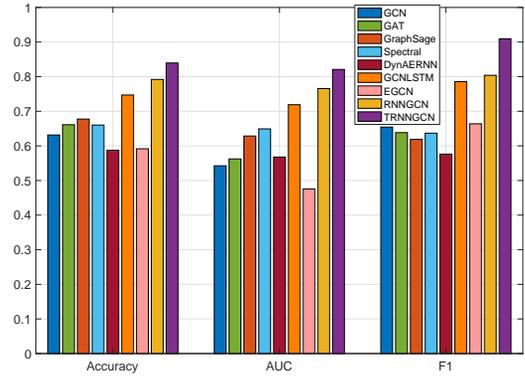}
    \caption{Comparison of methods on simulated data with heterogeneous cluster transition probabilities.}
    \label{fig:bar_simu}
\end{figure}

\end{document}